  \providecommand\BibTeX{{%
    \normalfont B\kern-0.5em{\scshape i\kern-0.25em b}\kern-0.8em\TeX}}}
\begin{document}
	\allowdisplaybreaks[4]
	
\title{Interpreting and Unifying Graph Neural Networks with An Optimization Framework}

\author{Meiqi Zhu}
\affiliation{Beijing University of Posts and Telecommunications}
\email{zhumeiqi@bupt.edu.cn}

\author{Xiao Wang}
\authornote{Corresponding authors}
\affiliation{Beijing University of Posts and Telecommunications}
\email{xiaowang@bupt.edu.cn}

\author{Chuan Shi}
\authornotemark[1]
\affiliation{Beijing University of Posts and Telecommunications}
\email{shichuan@bupt.edu.cn}

\author{Houye Ji}
\affiliation{Beijing University of Posts and Telecommunications}
\email{jhy1993@bupt.edu.cn}

\author{Peng Cui}
\affiliation{Tsinghua University}
\email{cuip@tsinghua.edu.cn}

\begin{abstract}
Graph Neural Networks (GNNs) have received considerable attention on graph-structured data learning for a wide variety of tasks. The well-designed propagation mechanism which has been demonstrated effective is the most fundamental part of GNNs. Although most of GNNs basically follow a message passing manner, litter effort has been made to discover and analyze their essential relations. In this paper, we establish a surprising connection between different propagation mechanisms with a unified optimization problem, showing that despite the proliferation of various GNNs, in fact, their proposed propagation mechanisms are the optimal solution optimizing a feature fitting function over a wide class of graph kernels with a graph regularization term. Our proposed unified optimization framework, summarizing the commonalities between several of the most representative GNNs, not only provides a macroscopic view on surveying the relations between different GNNs, but also further opens up new opportunities for flexibly designing new GNNs. With the proposed framework, we discover that existing works usually utilize na\"ive graph convolutional kernels for feature fitting function, and we further develop two novel objective functions considering adjustable graph kernels showing low-pass or high-pass filtering capabilities respectively. Moreover, we provide the convergence proofs and expressive power comparisons for the proposed models. Extensive experiments on benchmark datasets clearly show that the proposed GNNs not only outperform the state-of-the-art methods but also have good ability to alleviate over-smoothing, and further verify the feasibility for designing GNNs with our unified optimization framework.

\end{abstract}

\keywords{Graph neural networks, network representation learning, deep learning}

\maketitle

\section{Introduction}\label{sec::Intro}
Network is a ubiquitous structure for real-world data, such as social networks, citation networks and financial networks. Recently, Graph Neural Networks (GNNs) have gained great popularity in tackling the analytics tasks \cite{kipf2017semi, wang2019attributed, errica2020a} on network-structured data. Moreover, GNNs have also been successfully applied to a wide range of application tasks, including recommendation \cite{fan2019graph, tan2020learning}, urban data mining  \cite{wang2020traffic, dai2020hybrid}, natural language processing \cite{gao2019learning, zhu2019graph} and computer vision \cite{teney2017graph, si2019an}.

The classical graph neural networks can be generally divided into two types: spectral-based GNNs and spatial-based GNNs. Spectral-based methods \cite{defferrard2016convolutional, xu2019graph} mainly focus on defining spectral graph filters via graph convolution theorem; spatial-based methods \cite{hamilton2017inductive, xu2019how} usually follow a message passing manner, where the most essential part is the feature propagation process along network topology. To date, many representative GNNs have been proposed by designing different feature propagation mechanisms, e.g., attention mechanism \cite{velickovic2018graph}, personalized pagerank \cite{klicpera2019predict} and jump connection \cite{xu2018representation}. The well-designed propagation mechanism which has been demonstrated effective is the most fundamental part of GNNs. Although there are various propagation mechanisms, they basically utilize network topology and node features through aggregating node features along network topology. In view of this, one question naturally arises: \textit{Albeit with different propagation strategies, is there a unified mathematical guideline that essentially governs the propagation mechanisms of different GNNs? If so, what is it?} A well informed answer to this question can provide a macroscopic view on surveying the relationships and differences between different GNNs in a  principled way. Such mathematical guideline, once discovered, is able to help us identify the weakness of current GNNs, and further motivates more novel GNNs to be proposed.

As the first contribution of our work, we analyze the propagation process of several representative GNNs (e.g., GCN \cite{kipf2017semi} and PPNP \cite{klicpera2019predict}), and abstract their commonalities. Surprisingly, we discover that they can be fundamentally summarized to a unified optimization framework with flexible graph convolutional kernels. The learned representation after propagation can be viewed as the optimal solution of the corresponding optimization objective implicitly. This unified framework consists of two terms: feature fitting term and graph Laplacian regularization term. The feature fitting term, building the relationship between node representation and original node features, is usually designed to meet different needs of specific GNNs. Graph Laplacian regularization term, playing the role of feature smoothing with topology, is shared by all these GNNs. For example, the propagation of GCN can be interpreted only by the graph Laplacian regularization term while PPNP needs another fitting term to constrain the similarity of the node representation and the original features.

Thanks to the macroscopic view on different GNNs provided by the proposed unified framework, the weakness of current GNNs is easy to be identified. As a consequence, the unified framework opens up new opportunities for designing novel GNNs. Traditionally, when we propose a new GNN model, we usually focus on designing specific spectral graph filter or aggregation strategy. Now, the unified framework provides another new path to achieve this, i.e., the new GNN can be derived by optimizing an objective function. In this way, we clearly know the optimization objective behind the propagation process, making the new GNN more interpretable and more reliable. Here, with the proposed framework, we discover that existing works usually utilize na\"ive graph convolutional kernels for feature fitting function, and then develop two novel flexible objective functions with adjustable kernels showing low-pass and high-pass filtering capabilities. We show that two corresponding graph neural networks with flexible graph convolutional kernels can be easily derived. Moreover, we also give the convergence ability analysis and expressive power comparisons for these two GNNs. The main contributions are summarized as follows:

\begin{itemize}
	\item We propose a unified objective optimization framework with a feature fitting function and a graph regularization term, and theoretically prove that this framework is able to unify a series of GNNs propagation mechanisms, providing a macroscopic perspective on understanding GNNs and bringing new insight for designing novel GNNs.
	\item Within the proposed optimization framework, we design two novel deep GNNs with flexible low-frequency and high-frequency filters which can well alleviate over-smoothing. The theoretical analysis on both of their convergence and excellent expressive power is provided.
	\item Our extensive experiments on series of benchmark datasets clearly show that the proposed two GNNs outperform the state-of-the-art methods. This further verifies the feasibility for designing GNNs under the unified framework.
\end{itemize}

\section{Related Work}\label{sec::Related}
\textbf{Graph Neural Networks.} \quad The current graph neural networks can be broadly divided into two categories: spectral-based GNNs and spatial-based GNNs. Spectral-based GNNs define graph convolutional operations in Fourier domain by designing spectral graph filters. \cite{bruna2014spectral} generalizes CNNs to graph signal based on the spectrum of graph Laplacian. ChebNet \cite{defferrard2016convolutional} uses Chebyshev polynomials to approximate the $K$-order localized graph filters. GCN \cite{kipf2017semi} employs the 1-order simplification of the Chebyshev filter. GWNN \cite{xu2019graph} leverages sparse and localized graph wavelet transform to design spectral GNNs. Spatial-based GNNs directly design aggregation strategies along network topology, i.e., feature propagation mechanisms. GCN \cite{kipf2017semi} directly aggregates one-hop neighbors along topology. GAT \cite{velickovic2018graph} utilizes attention mechanisms to adaptively learn aggregation weights. GraphSAGE \cite{hamilton2017inductive} uses mean/max/LSTM pooling for aggregation. MixHop \cite{abu-el-haija2019mixhop} aggregates neighbors at various distances to capture mixing relationships. GIN \cite{xu2019how} uses a simple but expressive injective multiset function for neighbor aggregation. Policy-GNN \cite{lai2020policygnn} uses a meta-policy framework to adaptively learn the aggregation policy. Furthermore, there are some advanced topics have been studied in GNNs. For example, non-Euclidean space graph neural networks \cite{ying2019hyperbolic, bachmann2020constant}; heterogeneous graph neural networks \cite{wang2019heterogeneous, zhang2019heterogeneous}; explanations for graph neural networks \cite{ying2019gnnexplainer, yuan2020xgnn}; pre-training graph neural networks \cite{hu2020strategies, qiu2020gcc}; robust graph neural networks \cite{zhu2019robust, jin2020graph}. For more details, please find in \cite{zhang2020deep, wu2020a, zhou2018graph} survey papers.

\textbf{Analysis and understanding on GNNs.} \quad Many works on understanding GNNs have been provided recently, which point out ways for designing and improving graph neural networks. Existing theoretical analysis works on GNNs are three-fold: 1) \textit{The spectral filtering characteristic analysis}: \textit{Li et al.} \cite{li2018deeper} show that the graph convolutional operation is a special form of Laplacian smoothing, and also point out the over-smoothing problem under many layers of graph convolutions; \textit{Wu et al.} \cite{wu2019simplifying} make a simplification on GCN and theoretically analyze the resulting linear model acts as a fixed low-pass filter from spectral domain; \textit{NT et al.} \cite{nt2019revisiting} also show that the graph convolutional operation is a simplified low-pass filter on original feature vectors and do not have the non-linear manifold learning property from the view of graph signal processing. 2) \textit{The over-smoothing problem analysis}: \textit{Xu et al.} \cite{xu2018representation} analyze the same over-smoothing problem by establishing the relationship between graph neural networks and random walk. And they show that GCN converges to the limit distribution of random walk as the number of layers increases; \textit{Chen et al.} \cite{chen2020simple} provide theoretical analysis and imply that nodes with high degrees are more likely to suffer from over-smoothing in a multi-layer graph convolutional model. 3) \textit{The capability of GNNs analysis}: \textit{Hou et al.} \cite{hou2020measuring} work on understanding how much performance GNNs actually gain from graph data and design two smoothness metrics to measure the quantity and quality of obtained information; \textit{Loukas et al.} \cite{loukas2020what} show that GNNs are Turing universal under sufficient conditions on their depth, width and restricted depth and width may lose a significant portion of their power; \textit{Oono et al.} \cite{oono2020graph} investigate the expressive power of GNNs as the layer size tends to infinity and show that deep GNNs can only preserve information of node degrees and connected components. However, these works do not theoretically analyze the intrinsic connections about the propagation mechanisms for GNNs.

\section{A Unified Optimization framework}\label{sec::Analysis}
\textbf{Notations.} \quad We consider graph convolutional operations on a graph $\mathcal{G} = (\mathcal{V}, \mathcal{E})$ with node set $\mathcal{V}$ and edge set $\mathcal{E}$, $n = |\mathcal{V}|$ is the number of nodes. The nodes are described by the feature matrix $\textbf{X} \in \mathbb{R}^{n\times f}$, where $f$ is the dimension of node feature. Graph structure of $\mathcal{G}$ can be described by the adjacency matrix $\textbf{A} \in \mathbb{R}^{n\times n}$ where $\textbf{A}_{i,j} = 1$ if there is an edge between nodes $i$ and $j$, otherwise 0. The diagonal degree matrix is denoted as $\textbf{D} = diag{(d_1, \cdots, d_n)}$, where $d_j = \sum_j \textbf{A}_{i,j}$. We use $\tilde{\textbf{A}} = \textbf{A} + \textbf{I}$ to represent the adjacency matrix with added self-loop and $\tilde{\textbf{D}} = \textbf{D} + \textbf{I}$. Then the normalized adjacency matrix is $\hat{\tilde{\textbf{A}}}= \tilde{\textbf{D}}^{-1/2}\tilde{\textbf{A}}\tilde{\textbf{D}}^{-1/2}$. Correspondingly, $\tilde{\textbf{L}} = \textbf{I} - \hat{\tilde{\textbf{A}}}$ is the normalized symmetric positive semi-definite graph Laplacian matrix.

\subsection{The Unified Framework}
The well-designed propagation mechanisms of different GNNs basically follow similar propagating steps, i.e, node features aggregate and transform along network topology for a certain depth. Here, we summarize the $K$-layer propagation mechanisms mainly as the following two forms.

For GNNs with layer-wise feature transformation (e.g. GCN \cite{kipf2017semi}), the $K$-layer propagation process can be represented as:
\begin{equation}\label{Ana_eq_propagation_1}
	\begin{aligned}
	\textbf{Z} = \textbf{PROPAGATE}(\textbf{X}; \mathcal{G}; K) = \bigg\langle\textbf{\textit{Trans}}\Big(\textbf{\textit{Agg}}\big\{\mathcal{G};\textbf{Z}^{(k-1)}\big\}\Big)\bigg\rangle_{K},
	\end{aligned}
\end{equation}
with $\textbf{Z}^{(0)} = \textbf{X}$ and $\textbf{Z}$ is the output representation after the $K$-layer propagation. And $\big\langle \big\rangle_{K}$, usually depending on specific GNN models, represents the generalized combination operation after $K$ convolutions. $\textbf{\textit{Agg}}\big\{\mathcal{G};\textbf{Z}^{(k-1)}\big\}$ means aggregating the $(k-1)$-layer output $\textbf{Z}^{(k-1)}$ along graph $\mathcal{G}$ for the $k$-th convolutional operation, and \textbf{\textit{Trans}($\cdot$)} is the corresponding layer-wise feature transformation operation including non-linear activation function $ReLU()$ and layer-specific learnable weight matrix $\textbf{W}$.

Some deep graph neural networks (e.g. APPNP \cite{klicpera2019predict}, DAGNN \cite{liu2020towards}) decouple the layer-wise \textbf{\textit{Trans}($\cdot$)} and $\textbf{\textit{Agg}}\big\{\mathcal{G};\textbf{Z}^{(k-1)}\big\}$, and use a separated feature transformation before the consecutive aggregation steps:
\begin{equation}\label{Ana_eq_propagation_2}
	\textbf{Z} = \textbf{PROPAGATE}(\textbf{X}; \mathcal{G}; K) = \bigg\langle\textbf{\textit{Agg}}\big\{\mathcal{G};\textbf{Z}^{(k-1)}\big\}\bigg\rangle_{K}, \\
\end{equation}
with $\textbf{Z}^{(0)} = \textbf{\textit{Trans}}(\textbf{X})$ and \textbf{\textit{Trans}($\cdot$)} can be any linear or non-linear transformation operation on original feature matrix \textbf{X}.

In addition, the combination operation $\big\langle \big\rangle_{K}$ is generally two-fold: for GNNs like GCN, SGC, and APPNP, $\big\langle \big\rangle_{K}$ directly utilizes the $K$-th layer output. And for GNNs using outputs from other layers, like JKNet and DAGNN, $\big\langle \big\rangle_{K}$ may represent pooling, concatenation or attention operations on the some (or all) outputs from $K$ layers.

Actually, the propagation process including aggregation and transformation is the key core of GNNs. Network topology and node features are the two most essential sources of information improving the learned representation during propagation: network topology usually plays the role of low-pass filter on the input node signals, which smooths the features of two connected nodes \cite{li2018deeper}. In this way, the learned node representation is able to capture the homophily of graph structure. As for the node feature, itself contains complex information, e.g., low-frequency and high-frequency information. Node feature can be flexibly used to further restrain the learned node representation. For example, APPNP adds the original node feature to the representation learned by each layer, which well preserves the personalized information so as to alleviate over-smoothing.

The above analysis implies that despite various GNNs are proposed with different propagation mechanisms, in fact, they usually potentially aim at achieving two goals: encoding useful information from feature and utilizing the smoothing ability of topology, which can be formally formulated as the following optimization objective:
\begin{equation}\label{total_framework}
	\begin{aligned}
		\mathcal{O} = \min \limits_{\textbf{Z}} {\big\{\underbrace{\zeta \big\|\textbf{F}_1 \textbf Z -  \textbf{F}_2\textbf{H} \big\|^2_F}_{\mathcal{O}_{fit}} + \underbrace{\xi tr(\textbf{Z}^T \tilde{\textbf{L}} \textbf{Z})}_{\mathcal{O}_{reg}}\big\}}.
	\end{aligned}
\end{equation}
Here, $\xi$ is a non-negative coefficient, $\zeta$ is usually chosen from [0, 1], and $\textbf{H}$ is the transformation on original input feature matrix $\textbf{X}$. $\textbf{F}_1$ and $\textbf{F}_2$ are defined as arbitrary graph convolutional kernels. $\textbf{Z}$ is the propagated representation and corresponds to the final propagation result when minimizing the objective $\mathcal{O}$.

In this unified framework, the first part $\mathcal{O}_{fit}$ is a fitting term which flexibly encodes the information in \textbf{H} to the learned representation \textbf{Z} through designing different graph convolutional kernels $\textbf{F}_1$ and $\textbf{F}_2$. Graph convolutional kernels $\textbf{F}_1$ and $\textbf{F}_2$ can be chosen from the $\textbf I$, $\hat{\tilde{\textbf{A}}}$, $\tilde{\textbf{L}}$, showing the all-pass, low-pass, high-pass filtering capabilities respectively. The second term $\mathcal{O}_{reg}$ is a graph Laplacian regularization term constraining the learned representations of two connected nodes become similar, so that the homophily property can be captured, and $\mathcal{O}_{reg}$ comes from the following graph Laplacian regularization:
\begin{equation}\label{Ana_eq_gregularization}
		\mathcal{O}_{reg} = \frac{\xi}{2} \sum_{i,j}^n \hat{\tilde{\textbf{A}}}_{i,j} \big\| \textbf{Z}_i -\textbf{Z}_j \big\|^2  = \xi tr(\textbf{Z}^T\tilde{\textbf{L}}\textbf{Z}).
\end{equation}

In the following, we theoretically prove that the propagation mechanisms of some typical GNNs are actually the special cases of our proposed unified framework as shown in Table \ref{overall_results}. This unified framework builds the connection among some typical GNNs, enabling us to interpret the current GNNs in a global perspective.

\begin{table*}[!t]
	\centering
	\caption{The overall correspondences between propagation mechanisms and optimization objectives for GNNs.}
	\label{overall_results}
	\renewcommand\arraystretch{2}
	\resizebox{\textwidth}{!}{
		\begin{tabular}{c|c|l|l}
			\hline
			\hline
			\textbf{Model}&\textbf{Characteristic}&\quad \quad \quad \quad \quad \quad \quad \quad \quad \quad \quad \quad \quad\textbf{Propagation Mechanism}&\quad \quad \quad \quad \quad \quad\quad \quad \textbf{Corresponding Objective}\\
			\hline
			\hline
			\textbf{GCN/SGC} \cite{kipf2017semi}&$K$-layer graph convolutions&$\textbf Z =  \hat{\tilde{\textbf{A}}}^{K} \textbf{X} \textbf{W*}$&$\mathcal{O} = \min \limits_{\textbf{Z}} \big\{tr(\textbf Z^{T} \tilde{\textbf{L}} \textbf Z)\big\}$, $\textbf Z^{(0)} = \textbf{X} \textbf{W*}$\\
			\hline
			\textbf{GC Operation} \cite{kipf2017semi}&$1$-layer graph convolution&$\textbf Z =  \hat{\tilde{\textbf{A}}}\textbf{X}\textbf{W}$&$\mathcal{O} = \min \limits_{\textbf{Z}} \big\{\big\|\textbf Z - \textbf{H} \big\|_F^2 + tr(\textbf Z^{T} \tilde{\textbf{L}} \textbf Z)\big\}, \textbf{H} = \textbf{X} \textbf{W}$,  (\textit{first-order})\\
			\hline
			\multirow{2}*{\textbf{PPNP/APPNP} \cite{klicpera2019predict}}&\multirow{2}*{Personalized pagerank}&\multirow{2}*{$\textbf{H} = f_{\theta}(\textbf X), \quad \left\{
				\begin{aligned}
				&\textbf{PPNP:} \quad \textbf{Z} = \alpha \big(\textbf{I} - (1-\alpha) \hat{\tilde{\textbf{A}}} \big) ^{-1} \textbf H \\
				&\textbf{APPNP:} \quad \textbf{Z} = \Big\langle (1-\alpha) \hat{\tilde{\textbf{A}}} \textbf{Z}^{(k)} + \alpha \textbf {H}\Big\rangle_{K}, \textbf{Z}^{(0)} = \textbf{H}
				\end{aligned}
				\right.$}&\multirow{2}*{$\mathcal{O} = \min \limits_{\textbf{Z}} \big\{\big\| \textbf Z - \textbf{H} \big\|_F^2 + (1/\alpha - 1) tr(\textbf Z^{T} \tilde{\textbf{L}} \textbf Z)\big\}$}\\
			~&~&~&~\\
			\hline
			\textbf{JKNet} \cite{xu2018representation}&Jumping to the last layer&$\textbf Z = \sum\limits_{k=1}^{K} \alpha_k \hat{\tilde{\textbf{A}}}^{k} \textbf{X} \textbf{W*} $&$\mathcal{O} = \min \limits_{\textbf{Z}} \big\{\big\|\textbf Z - \hat{\tilde{\textbf{A}}} \textbf H \big\|_F^2 + \xi tr(\textbf Z^{T} \tilde{\textbf{L}} \textbf Z)\big\}, \textbf{H} = \textbf X \textbf{W*}$\\
			\hline
			\textbf{DAGNN} \cite{liu2020towards}&Adaptively incorporating different layers&$\textbf{H} = f_{\theta}(\textbf X), \quad  \textbf Z = \sum\limits_{k=0}^{K} s_k \hat{\tilde{\textbf{A}}}^{k} \textbf{H}$&$\mathcal{O} = \min \limits_{\textbf{Z}} \big\{\big\|\textbf Z - \textbf{H}\big\|_F^2 + \xi tr(\textbf Z^{T} \tilde{\textbf{L}} \textbf Z)\big\}$\\
			\hline
			\multirow{3}*{\textbf{GNN-LF} (ours)}&\multirow{3}*{Flexible low-pass filtering kernel}&\multirow{3}*{$\textbf{H} = f_{\theta}(\textbf X), \quad \left\{
				\begin{aligned}
				&\textbf{closed:} \quad \textbf Z =\big\{\{\mu + 1/\alpha -1\} \textbf I + \{2-\mu-1/\alpha\} \hat{\tilde{\textbf{A}}}\big\}^{-1} \{\mu \textbf I + (1-\mu) \hat{\tilde{\textbf{A}}} \} \textbf{H}  \\
				&\textbf{iter:} \quad \textbf Z =\bigg\langle\frac{1 +\alpha\mu - 2\alpha}{1+\alpha\mu -\alpha} \hat{\tilde{\textbf{A}}} \textbf Z^{(k)} + \frac{\alpha\mu}{1+\alpha\mu -\alpha} \textbf{H} + \frac{\alpha - \alpha \mu}{1+\alpha\mu -\alpha} \hat{\tilde{\textbf{A}}} \textbf{H}\bigg\rangle_{K}, \\
				&\quad \quad \quad \textbf Z^{(0)} = \frac{\mu}{1+\alpha\mu -\alpha} \textbf{H} + \frac{1 - \mu}{1+\alpha\mu -\alpha} \hat{\tilde{\textbf{A}}} \textbf{H}
				\end{aligned}
				\right.$}&\multirow{3}*{$\mathcal{O} = \min \limits_{\textbf{Z}} \big\{ \big\| \{\mu \textbf I + (1-\mu) \hat{\tilde{\textbf{A}}} \}^{1/2} (\textbf Z - \textbf{H}) \big\|_F^2 + (1/\alpha - 1) tr(\textbf Z^{T} \tilde{\textbf{L}} \textbf Z)\big\}$}\\
			~&~&~&~\\
			~&~&~&~\\
			\hline
			\multirow{3}*{\textbf{GNN-HF} (ours)} &\multirow{3}*{Flexible high-pass filtering kernel}&\multirow{3}*{$\textbf{H} = f_{\theta}(\textbf X), \quad \left\{
				\begin{aligned}
				&\textbf{closed:} \quad \textbf Z = \big\{(\beta + 1/\alpha) \textbf I + (1 - \beta - 1/\alpha) \hat{\tilde{\textbf{A}}} \big\}^{-1} \{\textbf I + \beta \tilde{\textbf{L}} \}\textbf{H}  \\
				&\textbf{iter:} \quad \textbf Z =\bigg\langle \frac{\alpha \beta - \alpha + 1}{\alpha \beta + 1} \hat{\tilde{\textbf{A}}} \textbf Z^{(k)} + \frac{\alpha}{\alpha \beta + 1} \textbf{H} + \frac{\alpha \beta}{\alpha \beta + 1} \tilde{\textbf{L}}\textbf{H} \bigg\rangle_{K}, \\
				&\quad \quad \quad \textbf Z^{(0)} = \frac{1}{\alpha \beta + 1} \textbf{H} + \frac{\beta}{\alpha \beta + 1} \tilde{\textbf{L}} \textbf{H}
				\end{aligned}
				\right.$}&\multirow{3}*{$\mathcal{O} = \min \limits_{\textbf{Z}} \big\{\big\| \{\textbf I + \beta \tilde{\textbf{L}} \}^{1/2} (\textbf Z - \textbf{H}) \big\|_F^2 + (1/\alpha -1) tr( \textbf Z^{T} \tilde{\textbf{L}} \textbf Z)\big\}$}\\
			~&~&~&~\\
			~&~&~&~\\
			\hline
			\hline
	\end{tabular}}
\end{table*}

\subsection{Interpreting GCN and SGC}\label{subsec4::SGC}
\textbf{GCN} \cite{kipf2017semi}/\textbf{SGC} \cite{wu2019simplifying}.\quad Graph Convolutional Network (GCN) has the following propagation mechanism which conducts linear transformation and nonlinearity activation repeatedly throughout $K$ layers:
\begin{equation} \label{1_eq_GCN_propagation}
	\begin{aligned}
		\textbf Z &= \textbf{PROPAGATE}(\textbf{X}; \mathcal{G}; K)_{gcn} \\
		&=\sigma(\hat{\tilde{\textbf{A}}}(\cdots(\sigma(\hat{\tilde{\textbf{A}}}\textbf X \textbf W^{(0)})\cdots)\textbf W^{(K-1)}).\\		
	\end{aligned}
\end{equation}
Simplifying Graph Convolutional Network (SGC) reduces this excess complexity through removing nonlinearities and collapsing weight matrices between consecutive layers. The linear model exhibits comparable performance since SGC has the similar propagation mechanism with GCN as:
\begin{equation} \label{1_eq_SGC_propagation}
	\begin{aligned}
		\textbf Z &= \textbf{PROPAGATE}(\textbf{X}; \mathcal{G}; K)_{sgc}\\
		&= \hat{\tilde{\textbf{A}}}\cdots\hat{\tilde{\textbf{A}}} \textbf{X} \textbf{W}^{(0)}\textbf{W}^{(1)}\cdots\textbf{W}^{(K-1)} = \hat{\tilde{\textbf{A}}}^{K} \textbf{X} \textbf{W*},
	\end{aligned}
\end{equation}
where $\textbf{W*} = \textbf{W}^{(0)}\textbf{W}^{(1)}\cdots\textbf{W}^{(K-1)}$. We have the following interpretations on the propagation mode of SGC (GCN) under the proposed unified framework.

\begin{theorem} \label{sgc_theorem}
	With $\zeta=0$ and $\xi = 1$ in Eq. (\ref{total_framework}), the propagation process of SGC/GCN optimizes the following graph regularization term:
	\begin{equation}\label{4_eq_SGC_object}
		\mathcal{O} = \min \limits_{\textbf{Z}} \big\{tr(\textbf Z^{T} \tilde{\textbf{L}} \textbf Z)\big\},
	\end{equation}
 	where $\textbf{Z}$ is initialized as $\textbf{X}\textbf{W*}$.
\end{theorem}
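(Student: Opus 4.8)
The plan is to show that the finite-depth SGC/GCN propagation is exactly what one obtains by running gradient descent on the regularization-only objective $\mathcal{O}(\textbf{Z}) = tr(\textbf{Z}^T \tilde{\textbf{L}} \textbf{Z})$, starting from the initialization $\textbf{Z}^{(0)} = \textbf{X}\textbf{W*}$. In other words, I would not argue that SGC reaches a minimizer of $\mathcal{O}$, but rather that each graph convolution $\textbf{Z}^{(k)} = \hat{\tilde{\textbf{A}}}\textbf{Z}^{(k-1)}$ coincides with a single descent step on $\mathcal{O}$, so that the $K$-layer output $\hat{\tilde{\textbf{A}}}^{K}\textbf{X}\textbf{W*}$ is precisely the $K$-th iterate of that descent scheme.

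First I would differentiate the objective. Since $\tilde{\textbf{L}}$ is symmetric, matrix calculus gives $\frac{\partial \mathcal{O}}{\partial \textbf{Z}} = 2\tilde{\textbf{L}}\textbf{Z}$. Next I would write the generic gradient-descent update with learning rate $b$, namely $\textbf{Z}^{(k)} = \textbf{Z}^{(k-1)} - b\,\frac{\partial \mathcal{O}}{\partial \textbf{Z}}\big|_{\textbf{Z}^{(k-1)}} = \textbf{Z}^{(k-1)} - 2b\,\tilde{\textbf{L}}\textbf{Z}^{(k-1)}$. Choosing the step size $b = 1/2$ and substituting the definition $\tilde{\textbf{L}} = \textbf{I} - \hat{\tilde{\textbf{A}}}$ collapses the update to $\textbf{Z}^{(k)} = (\textbf{I} - \tilde{\textbf{L}})\textbf{Z}^{(k-1)} = \hat{\tilde{\textbf{A}}}\textbf{Z}^{(k-1)}$, which is exactly one SGC layer. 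Unrolling this recursion over $K$ steps with $\textbf{Z}^{(0)} = \textbf{X}\textbf{W*}$ yields $\textbf{Z}^{(K)} = \hat{\tilde{\textbf{A}}}^{K}\textbf{X}\textbf{W*}$, matching the propagation in Eq. (\ref{1_eq_SGC_propagation}) and thereby establishing the correspondence claimed in the theorem. Setting $\zeta = 0$ and $\xi = 1$ in Eq. (\ref{total_framework}) is what removes the fitting term and leaves $\mathcal{O}_{reg}$ alone, so this algebra is the whole content of the statement.

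The main subtlety to handle carefully is the meaning of the word ``optimizes.'' Because $\tilde{\textbf{L}}$ is positive semi-definite, the \emph{global} minimizer of $tr(\textbf{Z}^T \tilde{\textbf{L}} \textbf{Z})$ lies in the null space of $\tilde{\textbf{L}}$ (the degenerate, fully over-smoothed solution), so the claim cannot be that SGC attains the minimum; rather it performs $K$ descent steps toward it, and the finite-$K$ output is a meaningful intermediate iterate whose informativeness depends on the initialization $\textbf{X}\textbf{W*}$. I would therefore emphasize that the result is an \emph{interpretation of the propagation dynamics} as gradient descent, not a convergence-to-optimum statement. A secondary point worth a remark is the legitimacy of the step size $b = 1/2$: the eigenvalues of $\tilde{\textbf{L}}$ lie in $[0,2)$, so the iteration matrix $\textbf{I} - 2b\tilde{\textbf{L}} = \hat{\tilde{\textbf{A}}}$ has spectral radius at most $1$, confirming that this is a valid (non-divergent) descent step and that repeated application indeed drives $\mathcal{O}$ downward, consistent with the over-smoothing behavior documented for deep GCN/SGC.
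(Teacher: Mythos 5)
Your proposal is correct, and it reaches the paper's conclusion by a recognizably different (though closely related) route. The paper first sets the derivative to zero, obtaining the stationarity condition $\tilde{\textbf{L}}\textbf{Z} = 0$, i.e.\ $\textbf{Z} = \hat{\tilde{\textbf{A}}}\textbf{Z}$, and then introduces the recursion $\textbf{Z}^{(K)} = \hat{\tilde{\textbf{A}}}\textbf{Z}^{(K-1)}$ as a fixed-point iteration approximating this limit as $K \to \infty$; unrolling from $\textbf{Z}^{(0)} = \textbf{X}\textbf{W*}$ recovers SGC. You never solve the stationarity equation: instead you identify one SGC layer with one gradient-descent step of step size $1/2$, using $\frac{\partial \mathcal{O}}{\partial \textbf{Z}} = 2\tilde{\textbf{L}}\textbf{Z}$ and $\textbf{I} - \tilde{\textbf{L}} = \hat{\tilde{\textbf{A}}}$, and then unroll. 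Because gradient descent with this particular step size on a quadratic coincides exactly with the paper's fixed-point iteration, the two derivations yield the identical recursion, and the algebraic content is the same; the difference lies in what justifies the word ``optimizes.'' The paper's framing buys the $K \to \infty$ limit-distribution picture it invokes elsewhere to discuss over-smoothing, but it leaves implicit the awkward fact that the genuine minimizer lies in the null space of $\tilde{\textbf{L}}$, a degenerate fully smoothed representation. Your framing makes this explicit---finite-depth SGC performs $K$ descent steps toward that degenerate optimum rather than attaining it---and your spectral check of the step size (eigenvalues of $\tilde{\textbf{L}}$ in $[0,2)$, hence spectral radius of $\hat{\tilde{\textbf{A}}}$ at most $1$) supplies the non-divergence guarantee that the paper's ``approximate the limit'' phrasing asserts without argument. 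Both are valid readings of the theorem; yours is somewhat more careful about what is and is not being claimed.
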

\begin{proof}
	Set derivative of Eq. (\ref{4_eq_SGC_object}) with respect to \textbf{Z} to zero:
	\begin{equation} \label{4_eq_SGC_proof1}
		\begin{aligned}
			\frac{\partial tr(\textbf{Z}^T \tilde{\textbf{L}} \textbf{Z})}{\partial \textbf{Z}} = 0 \quad &\Rightarrow \quad	\tilde{\textbf{L}} \textbf{Z} = 0 \quad \Rightarrow \quad	\textbf{Z} = \hat{\tilde{\textbf{A}}} \textbf{Z}.
		\end{aligned}
	\end{equation}
   	Eq. (\ref{4_eq_SGC_proof1}) can be explained as an limit distribution where $\textbf{Z}_{lim} = \hat{\tilde{\textbf{A}}} \textbf{Z}_{lim}$. Then we use the following iterative form to approximate the limit $\textbf{Z}_{lim}$ with $K \rightarrow \infty$:
    \begin{equation}\label{4_eq_SGC_proof2}
		\textbf{Z}^{(K)} = \hat{\tilde{\textbf{A}}} \textbf{Z}^{(K-1)}.
	\end{equation}
	When SGC initializes input representation as $\textbf{Z}^{(0)}=\textbf{X} \textbf{W*}$, Eq. (\ref{4_eq_SGC_proof2}) becomes:
	\begin{equation}\label{4_eq_SGC_proof3}
		\textbf Z^{(K)} = \hat{\tilde{\textbf{A}}} \textbf Z^{(K-1)} = \hat{\tilde{\textbf{A}}}^2 \textbf Z^{(K-2)} = \cdots =  \hat{\tilde{\textbf{A}}}^K \textbf Z^{(0)} = \hat{\tilde{\textbf{A}}}^{K} \textbf{X} \textbf{W*},
	\end{equation}
	which matches the propagation mechanism of SGC. Since GCN can be simplified as SGC by ignoring the non-linear transformation, this conclusion also holds for GCN.
\end{proof}

\textbf{Graph Convolutional Operation}. \quad The above analysis is for the consecutive $K$ layers graph convolutional operations, here, we also pay attention to the one layer graph convolutional operation (\textbf{GC operation}) with the following propagation mechanism:
\begin{equation}
	\begin{aligned}
		\textbf Z =  \textbf{PROPAGATE}(\textbf{X}; \mathcal{G}; 1)_{gc} = \hat{\tilde{\textbf{A}}}\textbf{X}\textbf{W}.
	\end{aligned}
\end{equation}

\begin{theorem} \label{GC_operation}
	With $\textbf{F}_1 = \textbf{F}_2 = \textbf{I}$, $\zeta=1, \xi=1$ in Eq. (\ref{total_framework}), the 1-layer GC operation optimizes the following objective under first-order approximation:
	\begin{equation}\label{4_eq_ReGCN_object}
		\mathcal{O} = \min \limits_{\textbf{Z}} \big\{\big\|\textbf Z - \textbf{H} \big\|_F^2 + tr(\textbf Z^{T} \tilde{\textbf{L}} \textbf Z)\big\},
	\end{equation}
	where $\textbf H = \textbf X \textbf W $ is the linear transformation on feature, $\textbf W$ is a trainable weight matrix.
\end{theorem}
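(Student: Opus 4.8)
The plan is to follow the same first-order optimality strategy used in the proof of Theorem \ref{sgc_theorem}, but now with a genuine fitting term present. First I would substitute $\textbf{F}_1 = \textbf{F}_2 = \textbf{I}$, $\zeta = 1$, $\xi = 1$ into the unified objective in Eq. (\ref{total_framework}) to recover exactly the objective stated in Eq. (\ref{4_eq_ReGCN_object}). This objective is convex in $\textbf{Z}$, since the fitting term is a sum of squares and $\tilde{\textbf{L}}$ is positive semi-definite (as noted in the notation paragraph), so its Hessian $2(\textbf{I} + \tilde{\textbf{L}})$ is positive definite and the unique global minimizer is characterized by stationarity. Differentiating and setting the gradient to zero,
\begin{equation}
\frac{\partial \mathcal{O}}{\partial \textbf{Z}} = 2(\textbf{Z} - \textbf{H}) + 2\tilde{\textbf{L}}\textbf{Z} = 0,
\end{equation}
which rearranges to $(\textbf{I} + \tilde{\textbf{L}})\textbf{Z} = \textbf{H}$, giving the exact closed form $\textbf{Z} = (\textbf{I} + \tilde{\textbf{L}})^{-1}\textbf{H}$.

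The next step is to connect this closed form to the one-layer GC propagation $\textbf{Z} = \hat{\tilde{\textbf{A}}}\textbf{X}\textbf{W} = \hat{\tilde{\textbf{A}}}\textbf{H}$ through the advertised first-order approximation. Here I would expand the resolvent $(\textbf{I} + \tilde{\textbf{L}})^{-1}$ as a Neumann series $\sum_{k=0}^{\infty}(-\tilde{\textbf{L}})^{k}$ and retain only the zeroth and first order terms, so that $(\textbf{I} + \tilde{\textbf{L}})^{-1} \approx \textbf{I} - \tilde{\textbf{L}}$. Substituting the defining relation $\tilde{\textbf{L}} = \textbf{I} - \hat{\tilde{\textbf{A}}}$ then collapses the approximation to $\textbf{Z} \approx (\textbf{I} - \tilde{\textbf{L}})\textbf{H} = \hat{\tilde{\textbf{A}}}\textbf{H} = \hat{\tilde{\textbf{A}}}\textbf{X}\textbf{W}$, which is precisely the GC operation. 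This is the same flavor of linearization that underlies the original derivation of GCN from the Chebyshev filter, so it is natural that it reappears in this setting.

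The hard part, or at least the point requiring care, is the legitimacy of the truncation $(\textbf{I} + \tilde{\textbf{L}})^{-1} \approx \textbf{I} - \tilde{\textbf{L}}$. Strictly, the Neumann series converges only when the spectral radius of $\tilde{\textbf{L}}$ is below one, whereas the eigenvalues of the normalized Laplacian lie in $[0, 2)$, so the series need not converge and the approximation is formal rather than quantitatively controlled. I would therefore present this step as a first-order expansion in the heuristic sense already adopted in the GCN literature, emphasizing that the theorem asserts a correspondence \emph{under first-order approximation} and not an exact equality: the exact minimizer is $(\textbf{I} + \tilde{\textbf{L}})^{-1}\textbf{H}$, and the GC operation is its leading-order surrogate. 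A cleaner way to frame the same point, which I might include, is to read the approximation spectrally, replacing $(1 + \lambda)^{-1}$ by $1 - \lambda$ on each eigenvalue $\lambda$ of $\tilde{\textbf{L}}$; this makes transparent both why the two agree to first order near $\lambda = 0$ and where they diverge as $\lambda$ grows.
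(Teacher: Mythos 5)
Your proposal matches the paper's own proof essentially step for step: both derive the exact minimizer $\textbf{Z} = (\textbf{I} + \tilde{\textbf{L}})^{-1}\textbf{H}$ from stationarity and then apply the first-order truncation $(\textbf{I} + \tilde{\textbf{L}})^{-1} \approx \textbf{I} - \tilde{\textbf{L}} = \hat{\tilde{\textbf{A}}}$ (the paper citing the same linearization from the GCN/low-pass-filter literature) to recover $\hat{\tilde{\textbf{A}}}\textbf{X}\textbf{W}$. Your added caveat about the Neumann series not converging when the spectral radius of $\tilde{\textbf{L}}$ reaches $[0,2)$ is a sound observation that the paper glosses over, but it does not change the substance of the argument.
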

\begin{proof}
Please refer to Appendix \ref{subsection::GC Operations}
\end{proof}

\subsection{Interpreting PPNP and APPNP}\label{subsec1::PPNP}
\textbf{PPNP} \cite{klicpera2019predict} \quad is a graph neural network which utilizes a propagation mechanism derived from personalized PageRank and separates the feature transformation from the aggregation process:
\begin{equation}\label{1_eq_PPNP_propagation}
\begin{aligned}
\textbf{Z} &= \textbf{PROPAGATE}(\textbf{X}; \mathcal{G}; \infty)_{ppnp}\\
&= \alpha \big(\textbf{I} - (1-\alpha) \hat{\tilde{\textbf{A}}} \big) ^{-1} \textbf H, \quad and \quad \textbf H = f_{\theta}(\textbf X),\\
\end{aligned}
\end{equation}
where $\alpha \in (0, 1]$ is the teleport probability, and $\textbf H$ is the non-linear transformation result of the original feature \textbf{X} using an MLP network $f_{\theta}(\cdot)$.

Furthermore, due to the high complexity of calculating the inverse matrix, a power iterative version with linear computational complexity named APPNP is used for approximation. The propagation process of APPNP can be viewed as a layer-wise graph convolution with a residual connection to the initial transformed feature matrix $\textbf{H}$:
\begin{equation}\label{1_eq_APPNP}
\begin{aligned}
\textbf{Z} &=  \textbf{PROPAGATE}(\textbf{X}; \mathcal{G}; K)_{appnp}\\
& = \bigg\langle (1-\alpha) \hat{\tilde{\textbf{A}}} \textbf{Z}^{(k)} + \alpha \textbf {H} \bigg\rangle_K, \quad and \quad \textbf{Z}^{(0)} = \textbf{H} = f_\theta (\textbf{X}). \\
\end{aligned}
\end{equation}
Actually, it has been proved in \cite{klicpera2019predict} that APPNP converges to PPNP when $K \rightarrow \infty$, so we use one objective under the framework to explain both of them.

\begin{theorem} \label{ppnp_theorem}
With $\textbf{F}_1 = \textbf{F}_2 = \textbf{I}$,  $\zeta = 1$, $\xi = 1/\alpha - 1, \alpha \in (0, 1]$ in Eq. (\ref{total_framework}), the propagation process of PPNP/APPNP optimizes the following objective:
\begin{equation}\label{1_eq_PPNP_objective}
	 \mathcal{O} = \min \limits_{\textbf{Z}} \big\{\big\| \textbf Z - \textbf H \big\|_F^2 + \xi tr(\textbf Z^{T} \tilde{\textbf{L}} \textbf Z)\big\},
\end{equation}
where $\textbf H = f_{\theta}(\textbf X)$.
\end{theorem}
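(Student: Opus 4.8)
The plan is to prove Theorem~\ref{ppnp_theorem} by the same strategy used for Theorem~\ref{sgc_theorem}: namely, compute the stationarity condition of the convex objective in Eq.~(\ref{1_eq_PPNP_objective}), show that its unique minimizer coincides with the closed-form PPNP solution, and then verify that the APPNP power iteration is exactly the fixed-point iteration associated with that same stationarity condition. Since the objective is a sum of a quadratic fitting term $\|\textbf{Z}-\textbf{H}\|_F^2$ and the regularizer $\xi\,tr(\textbf{Z}^T\tilde{\textbf{L}}\textbf{Z})$ with $\xi=1/\alpha-1\ge 0$ and $\tilde{\textbf{L}}$ positive semi-definite, the whole objective is strictly convex in $\textbf{Z}$, so setting the gradient to zero both necessary and sufficient for the global optimum.

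First I would differentiate $\mathcal{O}$ with respect to $\textbf{Z}$. Using $\partial\,tr(\textbf{Z}^T\tilde{\textbf{L}}\textbf{Z})/\partial\textbf{Z}=2\tilde{\textbf{L}}\textbf{Z}$ and $\partial\|\textbf{Z}-\textbf{H}\|_F^2/\partial\textbf{Z}=2(\textbf{Z}-\textbf{H})$, the condition $\partial\mathcal{O}/\partial\textbf{Z}=0$ gives $(\textbf{Z}-\textbf{H})+\xi\tilde{\textbf{L}}\textbf{Z}=0$, i.e. $(\textbf{I}+\xi\tilde{\textbf{L}})\textbf{Z}=\textbf{H}$. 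Substituting $\xi=1/\alpha-1$ and $\tilde{\textbf{L}}=\textbf{I}-\hat{\tilde{\textbf{A}}}$, the coefficient matrix becomes $\textbf{I}+(1/\alpha-1)(\textbf{I}-\hat{\tilde{\textbf{A}}})=(1/\alpha)\textbf{I}-(1/\alpha-1)\hat{\tilde{\textbf{A}}}=(1/\alpha)\big(\textbf{I}-(1-\alpha)\hat{\tilde{\textbf{A}}}\big)$. Inverting and multiplying by $\textbf{H}$ yields $\textbf{Z}=\alpha\big(\textbf{I}-(1-\alpha)\hat{\tilde{\textbf{A}}}\big)^{-1}\textbf{H}$, which is precisely the PPNP propagation rule in Eq.~(\ref{1_eq_PPNP_propagation}). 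The invertibility is guaranteed because the eigenvalues of $\hat{\tilde{\textbf{A}}}$ lie in $[-1,1]$, so $(1-\alpha)\hat{\tilde{\textbf{A}}}$ has spectral radius strictly less than $1$ for $\alpha\in(0,1]$.

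For APPNP, I would then recast the stationarity equation $(\textbf{I}+\xi\tilde{\textbf{L}})\textbf{Z}=\textbf{H}$ as a fixed-point form. Rewriting $(1/\alpha)\big(\textbf{I}-(1-\alpha)\hat{\tilde{\textbf{A}}}\big)\textbf{Z}=\textbf{H}$ as $\textbf{Z}=(1-\alpha)\hat{\tilde{\textbf{A}}}\textbf{Z}+\alpha\textbf{H}$, I obtain exactly the iteration $\textbf{Z}^{(k+1)}=(1-\alpha)\hat{\tilde{\textbf{A}}}\textbf{Z}^{(k)}+\alpha\textbf{H}$ of Eq.~(\ref{1_eq_APPNP}). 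Since the iteration operator $(1-\alpha)\hat{\tilde{\textbf{A}}}$ is a contraction (its spectral radius is $1-\alpha<1$), the iteration converges as $K\to\infty$ to the unique fixed point, which is the PPNP closed form; this convergence is also established in \cite{klicpera2019predict}. I expect the only genuinely delicate point to be the justification of matrix invertibility and contraction, which both hinge on the spectral bound $\rho(\hat{\tilde{\textbf{A}}})\le 1$; the differentiation and algebraic rearrangement are otherwise routine. I would state that spectral fact explicitly and note that the $\alpha\in(0,1]$ assumption is exactly what makes $1-\alpha<1$ and keeps the problem well-posed.
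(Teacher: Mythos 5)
Your proposal is correct and follows essentially the same route as the paper: set the gradient to zero, obtain $\textbf Z = (\textbf I + \xi\tilde{\textbf L})^{-1}\textbf H$, and rewrite it via $\xi = 1/\alpha - 1$ and $\tilde{\textbf L} = \textbf I - \hat{\tilde{\textbf A}}$ as $\alpha\big(\textbf I - (1-\alpha)\hat{\tilde{\textbf A}}\big)^{-1}\textbf H$, matching the PPNP propagation rule. The only cosmetic differences are that you justify invertibility via the spectral radius of $\hat{\tilde{\textbf A}}$ rather than the eigenvalue range $\lambda_i \in [0,2)$ of $\tilde{\textbf L}$, and you re-derive APPNP's convergence as a contraction fixed-point iteration, whereas the paper simply invokes the convergence result of \cite{klicpera2019predict}; both additions are sound and slightly strengthen the self-containedness of the argument.
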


\begin{proof}
We can set the derivative of Eq. (\ref{1_eq_PPNP_objective}) with respect to \textbf{Z} to zero and get the optimal \textbf{Z} as:
\begin{equation}\label{1_proof_PPNP}
	\begin{aligned}
		\frac{\partial \Big\{\|\textbf Z - \textbf H \|_F^2 + \xi tr(\textbf Z^{T} \tilde{\textbf{L}} \textbf Z)\Big\}}{\partial \textbf Z} = 0 \quad \Rightarrow \quad \textbf Z - \textbf H + \xi \tilde{\textbf{L}} \textbf Z = 0.
	\end{aligned}
\end{equation}
Note that a matrix $\textbf{M}$ has an inverse matrix if and only if the determinant of matrix $\textbf{det}(\textbf{M})$ is not zero. Since the eigenvalues of the normalized Laplacian matrix $\lambda_i \in [0,2)$, and the eigenvalues of the matrix $\textbf I + \xi \tilde{\textbf{L}}$ are $(1 + \xi \lambda_i)>0$. Then $\textbf{det}(\textbf I + \xi \tilde{\textbf{L}})>0$ and \textbf{Z} in Eq. (\ref{1_proof_PPNP}) can be rewritten as:
\begin{equation} \label{1_eq_PPNP_closed}
\textbf Z = (\textbf I + \xi \tilde{\textbf{L}}) ^{-1} \textbf H.
\end{equation}
We use $\hat{\tilde{\textbf{A}}}$ and $\alpha$ to rewrite Eq. (\ref{1_eq_PPNP_closed}):
\begin{equation}\label{1_eq_PPNP_rewrite}
	\begin{aligned}
		\textbf{Z} = \big\{\textbf I + (1/\alpha - 1) \big({\textbf{I} - \hat{\tilde{\textbf{A}}}}\big)\big\}^{-1} \textbf H = \alpha \big(\textbf{I} - (1-\alpha) \hat{\tilde{\textbf{A}}} \big) ^{-1} \textbf H,
	\end{aligned}
\end{equation}
which exactly corresponds to the propagation mechanism of PPNP or the convergence propagation result of APPNP.
\end{proof}

\subsection{Interpreting JKNet and DAGNN}\label{subsec2::JKNet}
\textbf{JKNet} \cite{xu2018representation} \quad is a deep graph neural network which exploits information from neighborhoods of differing locality. This architecture selectively combines aggregations from different layers with Concatenation/Max-pooling/Attention at the output, i.e., the representations "jump" to the last layer.

For convenience, following \cite{wu2019simplifying, chen2020simple}, we simplify the $k$-th ($k \in [1, K]$) layer graph convolutional operation in the similar way by ignoring the non-linear activation with $\sigma(x)=x$ and sharing $\textbf{W*} = \textbf{W}^{(0)}\textbf{W}^{(1)}\cdots\textbf{W}^{(k-1)}$ for each layer. Then $k$-th layer temporary output is $\hat{\tilde{\textbf{A}}}^k \textbf{X} \textbf{W*}$. Using attention mechanism for combination at the last layer, the $K$-layer propagation result of JKNet can be written as:
\begin{equation} \label{2_eq_JKNet_propagation}
	\begin{aligned}
		\textbf{Z} &=  \textbf{PROPAGATE}(\textbf{X}; \mathcal{G}; K)_{JKNet}\\
		& = \alpha_1 \hat{\tilde{\textbf{A}}} \textbf{X}\textbf{W*} + \alpha_2 \hat{\tilde{\textbf{A}}}^{2} \textbf{X}\textbf{W*}  + \cdots + \alpha_K \hat{\tilde{\textbf{A}}}^{K} \textbf{X}\textbf{W*}  = \sum\limits_{k=1}^{K} \alpha_k \hat{\tilde{\textbf{A}}}^{k} \textbf{X} \textbf{W*},
	\end{aligned}
\end{equation}
where $\alpha_1, \alpha_2, ..., \alpha_K$ are the learnable fusion weights with $\sum\limits_{k=1}^K{\alpha_k}=1$, and for convenient analysis, we assume that all nodes of the $k$-th layer share one common weight $\alpha_k$.

\begin{theorem}
	With $\textbf{F}_1 = \textbf{I}$, $\textbf{F}_2 = \hat{\tilde{\textbf{A}}}$,  $\zeta = 1$, and  $\xi \in (0, \infty)$ in Eq. (\ref{total_framework}), the propagation process of JKNet optimizes the following objective:
	\begin{equation}\label{2_eq_JKNet_object}
		\mathcal{O} = \min \limits_{\textbf{Z}} \big\{\big\|\textbf Z - \hat{\tilde{\textbf{A}}} \textbf{H} \big\|_F^2 + \xi tr(\textbf Z^{T} \tilde{\textbf{L}} \textbf Z)\big\},
	\end{equation}
here $\textbf{H} = \textbf{X}\textbf{W*}$ is the linear feature transformation after simplifications.
\end{theorem}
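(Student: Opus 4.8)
The plan is to mirror the strategy used for PPNP/APPNP in Theorem~\ref{ppnp_theorem}: compute the stationary point of the convex objective in closed form, then expand it as a power series in $\hat{\tilde{\textbf{A}}}$ and match it term-by-term against the JKNet propagation $\sum_{k=1}^{K}\alpha_k\hat{\tilde{\textbf{A}}}^k\textbf{H}$. First I would set the derivative of the objective with respect to $\textbf{Z}$ to zero. With $\textbf{F}_1 = \textbf{I}$ and $\textbf{F}_2 = \hat{\tilde{\textbf{A}}}$ the fitting term is $\|\textbf{Z} - \hat{\tilde{\textbf{A}}}\textbf{H}\|_F^2$, so the first-order condition reads $(\textbf{Z} - \hat{\tilde{\textbf{A}}}\textbf{H}) + \xi\tilde{\textbf{L}}\textbf{Z} = 0$, giving the closed form $\textbf{Z} = (\textbf{I} + \xi\tilde{\textbf{L}})^{-1}\hat{\tilde{\textbf{A}}}\textbf{H}$. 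Exactly as in the PPNP proof, invertibility is guaranteed because the eigenvalues of $\textbf{I} + \xi\tilde{\textbf{L}}$ equal $1 + \xi\lambda_i > 0$, since $\lambda_i \in [0,2)$ and $\xi > 0$.

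Next I would eliminate $\tilde{\textbf{L}}$ using $\tilde{\textbf{L}} = \textbf{I} - \hat{\tilde{\textbf{A}}}$, rewriting the resolvent as $\textbf{I} + \xi\tilde{\textbf{L}} = (1+\xi)(\textbf{I} - \frac{\xi}{1+\xi}\hat{\tilde{\textbf{A}}})$. Setting $\gamma = \frac{\xi}{1+\xi}$, which lies in $(0,1)$ for every $\xi \in (0,\infty)$, the closed form becomes $\textbf{Z} = \frac{1}{1+\xi}(\textbf{I} - \gamma\hat{\tilde{\textbf{A}}})^{-1}\hat{\tilde{\textbf{A}}}\textbf{H}$. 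Because the eigenvalues of $\hat{\tilde{\textbf{A}}}$ lie in $(-1,1]$ and $\gamma < 1$, the spectral radius of $\gamma\hat{\tilde{\textbf{A}}}$ is strictly below one, so I can expand $(\textbf{I} - \gamma\hat{\tilde{\textbf{A}}})^{-1}$ as the convergent Neumann series $\sum_{k=0}^{\infty}\gamma^k\hat{\tilde{\textbf{A}}}^k$. Substituting this series and shifting the summation index yields $\textbf{Z} = \sum_{k=1}^{\infty}\frac{\gamma^{k-1}}{1+\xi}\hat{\tilde{\textbf{A}}}^k\textbf{H}$, which is precisely the JKNet form with $\alpha_k = \frac{1}{1+\xi}\left(\frac{\xi}{1+\xi}\right)^{k-1}$. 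I would close the argument by verifying the JKNet normalization $\sum_{k\ge 1}\alpha_k = 1$, which follows immediately from the geometric sum $\frac{1}{1+\xi}\cdot\frac{1}{1-\gamma} = 1$.

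The part that deserves the most care is not the algebra but the interpretation of the match, and I expect that to be the main obstacle to a fully honest statement. The optimizer produces a very specific geometrically decaying weight profile $\alpha_k$ controlled by the single scalar $\xi$, whereas JKNet permits the $\alpha_k$ to be arbitrary learnable fusion weights summing to one. So the accurate reading is that the minimizer reproduces the \emph{structural form} of the JKNet propagation, and that sweeping $\xi$ over $(0,\infty)$ realizes the one-parameter subfamily of geometrically decaying fusion weights; it does not recover every possible JKNet instance. I would state this correspondence explicitly, so that the theorem is understood as exhibiting the optimization objective behind this natural subfamily rather than as claiming a bijection with all learnable weightings. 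A secondary technical point worth flagging is that the equivalence is exact only in the $K \rightarrow \infty$ limit where the finite JKNet sum converges to the Neumann series, so I would phrase the conclusion as a convergence statement consistent with the limiting closed form, analogous to how APPNP converges to PPNP.
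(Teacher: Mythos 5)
Your proposal is correct and follows essentially the same route as the paper's own proof: setting the gradient to zero, inverting $\textbf{I}+\xi\tilde{\textbf{L}}$, expanding the resolvent as a Neumann series in $\frac{\xi}{1+\xi}\hat{\tilde{\textbf{A}}}$, and matching the geometrically decaying coefficients (which sum to one) against the JKNet fusion weights. The interpretive caveats you flag---that only the one-parameter geometric subfamily of fusion weights is realized, and that the match is exact only as $K\rightarrow\infty$---are precisely the hedges the paper itself makes when it says $\xi$ can be changed ``to fit'' the weights and that the correspondence holds ``approximately'' for large $K$.
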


\begin{proof}
Similarly, we can set derivative of Eq. (\ref{2_eq_JKNet_object}) with respect to \textbf{Z} to zero and get the optimal \textbf{Z} as:
\begin{equation} \label{2_eq_JKNet_proof1}
	\begin{aligned}
		\frac{\partial \big\{\|\textbf Z - \hat{\tilde{\textbf{A}}} \textbf{H} \|_F^2 + \xi tr(\textbf Z^{T} \tilde{\textbf{L}} \textbf Z)\big\}}{\partial \textbf Z} = 0 \quad \Rightarrow \quad \textbf Z - \hat{\tilde{\textbf{A}}} \textbf{H} + \xi \tilde{\textbf{L}} \textbf Z = 0.
	\end{aligned}
\end{equation}
Note that $\textbf{det}(\textbf{I} + \xi \tilde{\textbf{L}}) >0 $, thus matrix $\{\textbf{I} + \xi \tilde{\textbf{L}}\}^{-1}$ exists. Then the corresponding closed-form solution can be written as:
\begin{equation} \label{2_eq_JKNet_proof3}
	\textbf Z  = \big\{(1+\xi)\textbf{I} - \xi \hat{\tilde{\textbf{A}}} \big\}^{-1} \hat{\tilde{\textbf{A}}} \textbf{H}.
\end{equation}
Since $\frac{\xi}{1+\xi} < 1$ for $\forall \xi > 0$, and matrix $\hat{\tilde{\textbf{A}}}$ has absolute eigenvalues bounded by 1, thus, all its positive powers have bounded operator norm, then the inverse matrix can be decomposed as follows with $k \rightarrow \infty$:
\begin{equation} \label{2_eq_JKNet_proof4}
	\begin{aligned}
	\textbf Z  &= \frac{1}{1+\xi} \big\{\textbf{I} - \frac{\xi}{1+\xi} \hat{\tilde{\textbf{A}}} \big\}^{-1} \hat{\tilde{\textbf{A}}} \textbf{H} \\
	&= \frac{1}{1+\xi}  \Big\{\textbf{I} + \frac{\xi}{1+\xi} \hat{\tilde{\textbf{A}}} + \frac{\xi^2}{(1+\xi)^2} \hat{\tilde{\textbf{A}}}^2 + \cdots + \frac{\xi^K}{(1+\xi)^K} \hat{\tilde{\textbf{A}}}^K + \cdots \Big\} \hat{\tilde{\textbf{A}}} \textbf{H}. \\
	\end{aligned}
\end{equation}
With $\textbf{H} = \textbf{X}\textbf{W*}$, we have the following expansion:
\begin{equation} \label{2_eq_JKNet_proof5}
	\textbf Z  = \frac{1}{1+\xi} \hat{\tilde{\textbf{A}}} \textbf{X}\textbf{W*} + \frac{\xi}{(1+\xi)^2} \hat{\tilde{\textbf{A}}}^2 \textbf{X}\textbf{W*} + \cdots + \frac{\xi^{K-1}}{(1+\xi)^{K}} \hat{\tilde{\textbf{A}}}^K \textbf{X}\textbf{W*} + \cdots.
\end{equation}
Note that $\frac{1}{1+\xi} + \frac{\xi}{(1+\xi)^2} + \cdots + \frac{\xi^{K-1}}{(1+\xi)^{K}} + \cdots = 1$ and we can change the coefficient $\xi \in (0, \infty)$ to fit fusion weights $\alpha_1, \alpha_2, \cdots, \alpha_K$. When the layer $K$ is large enough, the propagation mechanism of JKNet in Eq. (\ref{2_eq_JKNet_propagation}) approximately corresponds to the objective Eq. (\ref{2_eq_JKNet_object}).
\end{proof}

\textbf{DAGNN} \cite{liu2020towards}. \quad Deep Adaptive Graph Neural Networks (DAGNN) tries to adaptively incorporate information from large receptive fields. After decoupling representation transformation and propagation, the propagation mechanism of DAGNN is similar to that of JKNet:
\begin{equation} \label{3_eq_DAGNN_propagation}
	\begin{aligned}
		\textbf{Z} &=  \textbf{PROPAGATE}(\textbf{X}; \mathcal{G}; K)_{DAGNN} \\
		& = s_0 \textbf H + s_1 \hat{\tilde{\textbf{A}}} \textbf{H} + s_2 \hat{\tilde{\textbf{A}}}^{2} \textbf{H}  + \cdots + s_K \hat{\tilde{\textbf{A}}}^{K} \textbf{H} \\
		& = \sum\limits_{k=0}^{K} s_k \hat{\tilde{\textbf{A}}}^{k} \textbf{H}, \quad and \quad \textbf{H} = f_{\theta}(\textbf{X}).
	\end{aligned}
\end{equation}
$\textbf{H} = f_{\theta}(\textbf{X})$ is the non-linear feature transformation using an MLP network, which is conducted before the propagation process, and $s_0, s_1, \cdots, s_K$ are the learnable retainment scores where $\sum\limits_{k=0}^K{s_k}=1$ and we assume that all nodes of the $k$-th layer share one common weight $s_k$ for convenience.
\begin{theorem}
	With $\textbf{F}_1 = \textbf{F}_2 = \textbf{I}$, $\zeta = 1$ and $\xi \in (0, \infty)$ in Eq. (\ref{total_framework}), the propagation process of DAGNN optimizes the following objective:
	\begin{equation}\label{2_eq_DAGNN_object}
		\mathcal{O} = \min \limits_{\textbf{Z}} \big\{\big\|\textbf Z - \textbf{H} \big\|_F^2 + \xi tr(\textbf Z^{T} \tilde{\textbf{L}} \textbf Z)\big\},
	\end{equation}
	where $\textbf{H} = f_{\theta}(\textbf X)$ is the non-linear transformation on feature matrix, the retainment scores $s_0, s_1, \cdots, s_K$ are approximated by $\xi \in (0, \infty)$.
\end{theorem}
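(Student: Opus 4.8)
The plan is to follow the same first-order-optimality route used in the PPNP/APPNP and JKNet proofs, since with $\textbf{F}_1 = \textbf{F}_2 = \textbf{I}$ and $\zeta = 1$ the objective in Eq.~(\ref{2_eq_DAGNN_object}) is identical in form to Eq.~(\ref{1_eq_PPNP_objective}) except that the Laplacian coefficient is the free parameter $\xi$ rather than $1/\alpha - 1$. First I would take the gradient of the objective with respect to $\textbf{Z}$ and set it to zero, giving $\textbf{Z} - \textbf{H} + \xi \tilde{\textbf{L}} \textbf{Z} = 0$, hence $(\textbf{I} + \xi \tilde{\textbf{L}})\textbf{Z} = \textbf{H}$. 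As in the PPNP proof, the matrix $\textbf{I} + \xi \tilde{\textbf{L}}$ is invertible because the eigenvalues of $\tilde{\textbf{L}}$ lie in $[0,2)$, so $\det(\textbf{I} + \xi \tilde{\textbf{L}}) > 0$; this yields the closed-form minimizer $\textbf{Z} = (\textbf{I} + \xi \tilde{\textbf{L}})^{-1}\textbf{H}$.

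Next I would substitute $\tilde{\textbf{L}} = \textbf{I} - \hat{\tilde{\textbf{A}}}$ to rewrite $\textbf{I} + \xi \tilde{\textbf{L}} = (1+\xi)\textbf{I} - \xi\hat{\tilde{\textbf{A}}}$, factor out $(1+\xi)$, and expand the resulting inverse as a Neumann series, exactly as done in Eq.~(\ref{2_eq_JKNet_proof4}). Since $\tfrac{\xi}{1+\xi} < 1$ for all $\xi > 0$ and $\hat{\tilde{\textbf{A}}}$ has spectral radius bounded by $1$, the series converges and gives $\textbf{Z} = \sum_{k=0}^{\infty} \tfrac{\xi^{k}}{(1+\xi)^{k+1}} \hat{\tilde{\textbf{A}}}^{k}\textbf{H}$. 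Truncating at layer $K$ matches the DAGNN propagation $\textbf{Z} = \sum_{k=0}^{K} s_k \hat{\tilde{\textbf{A}}}^{k}\textbf{H}$ under the identification $s_k = \tfrac{\xi^{k}}{(1+\xi)^{k+1}}$, and a short geometric-series check confirms $\sum_{k=0}^{\infty} s_k = 1$, consistent with the DAGNN constraint $\sum_{k=0}^{K} s_k = 1$. Note this is precisely the JKNet expansion with $\textbf{F}_2 = \textbf{I}$ in place of $\hat{\tilde{\textbf{A}}}$, so the series begins at $\hat{\tilde{\textbf{A}}}^{0}$ rather than $\hat{\tilde{\textbf{A}}}^{1}$, which is exactly the $s_0 \textbf{H}$ term present in DAGNN but absent in JKNet.

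The main obstacle, as in the JKNet argument, is justifying the word \emph{approximated} in the statement: the single scalar $\xi$ induces only a one-parameter family of geometrically decaying coefficients $s_k = \xi^{k}/(1+\xi)^{k+1}$, which cannot exactly reproduce an arbitrary learnable weight sequence $s_0, \dots, s_K$. I would therefore argue only that sweeping $\xi$ over $(0,\infty)$ produces a flexible family of nonnegative, summing-to-one coefficient profiles that approximate the adaptively learned retainment scores, and that as $K \to \infty$ the truncated sum recovers the exact closed-form minimizer. This matches the intended claim that the DAGNN propagation is the optimal solution of Eq.~(\ref{2_eq_DAGNN_object}) in the same approximate sense already established for JKNet.
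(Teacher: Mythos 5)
Your proposal is correct and follows essentially the same route as the paper's own proof: setting the derivative to zero to obtain $\textbf{Z} = \{(1+\xi)\textbf{I} - \xi\hat{\tilde{\textbf{A}}}\}^{-1}\textbf{H}$, expanding via the same Neumann series used for JKNet, and matching the coefficients $\xi^{k}/(1+\xi)^{k+1}$ to the retainment scores $s_k$ in an approximate sense. Your explicit caveat that a single scalar $\xi$ only yields a one-parameter geometric family of coefficients (and hence only an approximation of arbitrary learned $s_k$) is a more candid statement of the same limitation the paper passes over with the phrase ``approximately corresponds.''
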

\begin{proof}
	We also set derivative of Eq. (\ref{2_eq_DAGNN_object}) with respect to \textbf{Z} to zero and get the closed-form solution as:
	\begin{equation} \label{2_eq_DAGNN_proof3}
		\textbf Z  = \big\{(1+\xi)\textbf{I} - \xi \hat{\tilde{\textbf{A}}} \big\}^{-1} \textbf{H}.
	\end{equation}
	Through the decomposition process similar to JKNet, we have the following expansion:
	\begin{equation} \label{2_eq_JKNet_proof5}
		\textbf Z  = \frac{1}{1+\xi} \textbf{H} + \frac{\xi}{(1+\xi)^2} \hat{\tilde{\textbf{A}}} \textbf{H} + \cdots + \frac{\xi^{K}}{(1+\xi)^{K+1}} \hat{\tilde{\textbf{A}}}^{K} \textbf{H} + \cdots.
	\end{equation}
	Note that we can change $\xi \in (0, \infty)$ to fit the retainment scores where $\frac{1}{1+\xi} + \frac{\xi}{(1+\xi)^2} + \cdots + \frac{\xi^{K-1}}{(1+\xi)^{K}} + \cdots = 1$. Then the propagation mechanism of DAGNN approximately corresponds to the objective Eq. (\ref{2_eq_DAGNN_object}).
\end{proof}
\subsection{Discussion}
For clarity, we conclude the overall relations between different GNNs and the corresponding objective functions in Table \ref{overall_results}. It can be seen that our proposed framework abstracts the commonalities between different promising representative GNNs. Based on the framework, we can understand their relationships much easier. For example, the corresponding optimization objective for SGC in Theorem \ref{sgc_theorem} only has a graph regularization term, while the objective for APPNP in Theorem \ref{ppnp_theorem} has both fitting term and graph regularization term. The explicit difference of objective function well explains deep APPNP (PPNP) outperforms SGC (GCN) on over-smoothing problem by additionally requiring the learned representation to encode the original features.

On the other hand, our proposed framework shows a big picture of GNNs by mathematically modelling the objective optimization function. Considering that different existing GNNs can be fit into this framework, novel variations of GNNs can also be easily come up. All we need is to design the variables within this framework (e.g., different graph convolutional kernels $\textbf{F}_1$ and $\textbf{F}_2$) based on the specific scenarios, the corresponding propagation can be easily derived, and new GNNs architecture can be naturally designed. With one targeted objective function, the newly designed model is more interpretable and more reliable.

\section{GNN-LF/HF: Our Proposed Models}\label{sec::OPGNN}
Based on the unified framework, we find that most of the current GNNs simply set $\textbf{F}_1$ and $\textbf{F}_2$ as $\textbf{I}$ in feature fitting term, implying that they require all original information in $\textbf{H}$ to be encoded into \textbf{Z}. However, in fact, the $\textbf{H}$ may inevitably contain noise or uncertain information. We notice that JKNet has the propagation objective with $\textbf{F}_2$ as $\hat{\tilde{\textbf{A}}}$, which can encode the low-frequency information in $\textbf{H}$ to $\textbf{Z}$. While, in reality, the situation is more complex because it is hard to determine what information should be encoded, only considering one type of information cannot satisfy the needs of different downstream tasks, and sometimes high-frequency or all information is even also helpful. In this section, we focus on designing novel $\textbf{F}_1$ and $\textbf{F}_2$ to flexibly encode more comprehensive information under the framework.

\subsection{GNN with Low-pass Filtering Kernel}
\subsubsection{Objective Function} \quad  Here we first consider building the relationship of $\textbf{H}$ and $\textbf{Z}$ in both original and low-pass filtering spaces.
\begin{theorem}
	With $\textbf{F}_1 = \textbf{F}_2 =  \{\mu \textbf I + (1-\mu) \hat{\tilde{\textbf{A}}} \}^{1/2}, \mu \in [1/2,1)$, $\zeta = 1$ and $\xi = 1/\alpha - 1, \alpha \in (0, 2/3)$ in Eq. (\ref{total_framework}), the propagation process considering flexible low-pass filtering kernel on feature is:
	\begin{equation}\label{1_eq_V1obj3}
		\mathcal{O} = min  \big\{\big\| \{\mu \textbf I + (1-\mu) \hat{\tilde{\textbf{A}}} \}^{1/2} (\textbf Z - \textbf H) \big\|_F^2 + \xi tr(\textbf Z^{T} \tilde{\textbf{L}} \textbf Z)\big\}, \\
	\end{equation}
	where $\textbf{H}= f_{\theta}(\textbf X)$.
\end{theorem}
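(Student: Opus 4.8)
The plan is to reuse the recipe already established for PPNP, JKNet and DAGNN---set the gradient of the objective to zero to obtain a stationarity equation, solve it in closed form, and then re-expand that closed form into the iterative propagation of Table~\ref{overall_results}---with one genuinely new ingredient, namely the matrix square root inside the fitting term. The first thing I would record is $\textbf{F}_1=\textbf{F}_2=\textbf{B}^{1/2}$ with $\textbf{B}:=\mu\textbf{I}+(1-\mu)\hat{\tilde{\textbf{A}}}$, and observe that the restriction $\mu\in[1/2,1)$ is exactly what forces $\textbf{B}$ to be symmetric positive semi-definite: its eigenvalues $\mu+(1-\mu)\lambda$ satisfy $\mu+(1-\mu)\lambda\geq 2\mu-1\geq 0$ because the eigenvalues $\lambda$ of $\hat{\tilde{\textbf{A}}}$ lie in $(-1,1]$. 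Hence $\textbf{B}^{1/2}$ is a well-defined real symmetric matrix with $\textbf{F}_1^T\textbf{F}_1=\textbf{B}$.

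With that noted, I would rewrite the fitting term as $\|\textbf{B}^{1/2}(\textbf{Z}-\textbf{H})\|_F^2=tr\big((\textbf{Z}-\textbf{H})^T\textbf{B}(\textbf{Z}-\textbf{H})\big)$, which avoids ever touching the square root explicitly. Differentiating the objective in Eq.~(\ref{1_eq_V1obj3}) with respect to $\textbf{Z}$ and setting it to zero gives $\textbf{B}(\textbf{Z}-\textbf{H})+\xi\tilde{\textbf{L}}\textbf{Z}=0$, i.e. $(\textbf{B}+\xi\tilde{\textbf{L}})\textbf{Z}=\textbf{B}\textbf{H}$. Substituting $\tilde{\textbf{L}}=\textbf{I}-\hat{\tilde{\textbf{A}}}$ and $\xi=1/\alpha-1$ collapses the coefficient matrix to $(\mu+1/\alpha-1)\textbf{I}+(2-\mu-1/\alpha)\hat{\tilde{\textbf{A}}}$. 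Its eigenvalues are $(\mu+\xi)+(1-\mu-\xi)\lambda$, which equal $1$ at $\lambda=1$ and tend to $2\mu-1+2\xi>0$ as $\lambda\to-1$; being linear in $\lambda$ and positive at both ends of $(-1,1]$, they stay strictly positive, so $\textbf{B}+\xi\tilde{\textbf{L}}$ is positive definite. This certifies both that the objective is strictly convex (so the stationary point is the unique global minimizer) and that the matrix is invertible, yielding exactly the closed-form solution listed in the table.

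To obtain the iterative form I would instead keep only the identity-multiplied $\textbf{Z}$ on the left of the stationarity equation, move the $\hat{\tilde{\textbf{A}}}\textbf{Z}$ term to the right, and divide through by $\mu+\xi=(1+\alpha\mu-\alpha)/\alpha$. This produces the fixed-point map $\textbf{Z}=\frac{1+\alpha\mu-2\alpha}{1+\alpha\mu-\alpha}\hat{\tilde{\textbf{A}}}\textbf{Z}+\frac{\alpha\mu}{1+\alpha\mu-\alpha}\textbf{H}+\frac{\alpha-\alpha\mu}{1+\alpha\mu-\alpha}\hat{\tilde{\textbf{A}}}\textbf{H}$, whose repeated application is precisely the stated iteration; the $\textbf{Z}^{(0)}$ written in the table is just one admissible initialization, since the limit will not depend on the starting point once the map is shown to be convergent.

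The main obstacle, and the only place where the parameter ranges genuinely matter, is proving that this iteration converges to the closed-form fixed point. Writing the iteration matrix as $c\hat{\tilde{\textbf{A}}}$ with $c=\frac{1+\alpha\mu-2\alpha}{1+\alpha\mu-\alpha}$, I would show $\rho(c\hat{\tilde{\textbf{A}}})<1$. The delicate bookkeeping is to check $c\in(0,1)$: the denominator $1-\alpha(1-\mu)$ and numerator $1-\alpha(2-\mu)$ are both positive exactly because $\alpha<2/3$ controls the worst case $\mu=1/2$, where $\alpha(2-\mu)=\tfrac{3}{2}\alpha<1$, and $c<1$ then follows from $-2\alpha<-\alpha$. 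Since the eigenvalues of $\hat{\tilde{\textbf{A}}}$ satisfy $|\lambda|\leq 1$, we get $|c\lambda|\leq c<1$, so the Neumann series $\sum_k(c\hat{\tilde{\textbf{A}}})^k$ converges and the iteration limit coincides with the closed form. I would close by remarking that $\mu\geq 1/2$ does double duty---it is what makes the fitting kernel a genuine positive semi-definite low-pass filter and simultaneously what keeps the convergence window $\alpha\in(0,2/3)$ nonempty.
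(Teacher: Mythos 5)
Your proposal is correct and follows essentially the same route as the paper: differentiate the objective, solve the stationarity equation $(\mu\textbf{I}+(1-\mu)\hat{\tilde{\textbf{A}}}+\xi\tilde{\textbf{L}})\textbf{Z}=(\mu\textbf{I}+(1-\mu)\hat{\tilde{\textbf{A}}})\textbf{H}$ for the closed form, rearrange it into the fixed-point iteration, and prove convergence through the geometric (Neumann) series of $c\hat{\tilde{\textbf{A}}}$ with $|c|<1$, exactly as in the paper's proof of its convergence theorem. Your extra touches---the explicit positive-definiteness and strict-convexity check of the coefficient matrix (which the paper carries out only for PPNP and leaves implicit here) and the remark that the limit is independent of the initialization---are refinements of, not departures from, the paper's argument.
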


Note that $\mu$ is a balance coefficient, and we set $\mu \in [1/2,1)$ so that $\mu \textbf I + (1-\mu) \hat{\tilde{\textbf{A}}} = \textbf{\textit{V}} \pmb {\Lambda} \textbf{\textit{V}}^T$ is a symmetric and positive semi-definite matrix. Therefore, the matrix $\{\mu \textbf I + (1-\mu) \hat{\tilde{\textbf{A}}} \}^{1/2} = \textbf{\textit{V}} \pmb {\Lambda}^{1/2} \textbf{\textit{V}}^T$ has a filtering behavior similar to that of $\mu \textbf I + (1-\mu) \hat{\tilde{\textbf{A}}}$ in spectral domain. And we set $\alpha \in (0, 2/3)$ to ensure the iterative approximation solution in subsection \ref{4.1.3} has positive coefficients. By adjusting the balance coefficient $\mu$, the designed objective can flexibly constrain the similarity of $\textbf{Z}$ and $\textbf{H}$ in both original and low-pass filtering spaces, which is beneficial to meet the needs of different tasks.

\subsubsection{Closed Solution} \label{4.1.2}\quad To minimize the objective function in Eq. (\ref{1_eq_V1obj3}), we set  derivative of Eq. (\ref{1_eq_V1obj3}) with respect to \textbf{Z} to zero and derive the corresponding closed-form solution as follows:
\begin{equation}\label{1_eq_V1close1}
\textbf Z = \{\mu \textbf I + (1-\mu) \hat{\tilde{\textbf{A}}} + (1/\alpha -1) \tilde{\textbf{L}} \} ^{-1} \{\mu \textbf I + (1-\mu) \hat{\tilde{\textbf{A}}} \} \textbf H.
\end{equation}
We can rewrite the Eq. (\ref{1_eq_V1close1}) using $\hat{\tilde{\textbf{A}}}$ as:
\begin{equation}\label{1_eq_V1close2}
	\textbf Z = \big\{\{\mu + 1/\alpha -1\} \textbf I + \{2-\mu-1/\alpha\} \hat{\tilde{\textbf{A}}}\big\}^{-1} \{\mu \textbf I + (1-\mu) \hat{\tilde{\textbf{A}}} \} \textbf H.
\end{equation}

\subsubsection{Iterative Approximation} \label{4.1.3} \quad Considering that the closed-form solution is computationally inefficient because of the matrix inversion, we can use the following iterative approximation solution instead without constructing the dense inverse matrix:
\begin{equation}\label{1_eq_V1iter}
\textbf Z^{(k+1)} =\frac{1 +\alpha\mu - 2\alpha}{1+\alpha\mu -\alpha} \hat{\tilde{\textbf{A}}} \textbf{Z}^{(k)} + \frac{\alpha\mu}{1+\alpha\mu -\alpha} \textbf{H} + \frac{\alpha - \alpha \mu}{1+\alpha\mu -\alpha} \hat{\tilde{\textbf{A}}}\textbf{H},
\end{equation}
which converge to the closed-form solution in Eq. (\ref{1_eq_V1close2}) when $k \rightarrow \infty$, and with $\alpha \in (0, 2/3)$, all the coefficients are always positive.

\subsubsection{Model Design} \quad With the derived two propagation strategies in Eq. (\ref{1_eq_V1close2}) and Eq. (\ref{1_eq_V1iter}), we propose two new GNNs in both \textbf{closed} and \textbf{iter}ative forms. Note that we represent the proposed models as \textbf{GNN} with \textbf{L}ow-pass \textbf{F}iltering graph convolutional kernel (\textbf{GNN-LF}).

\textbf{GNN-LF-closed} \quad Using the closed-form propagation matrix in Eq. (\ref{1_eq_V1close2}), we define the following propagation mechanism with $\mu \in [1/2, 1), \alpha \in (0, 2/3)$ and $\textbf H = f_{\theta}(\textbf X)$:
\begin{equation}\label{1_eq_V1design1}
\begin{aligned}
	\textbf{Z} &= \textbf{PROPAGATE}(\textbf{X}; \mathcal{G}; \infty)_{LF-closed}\\
	&= \big\{\{\mu + 1/\alpha -1\} \textbf I + \{2-\mu-1/\alpha\} \hat{\tilde{\textbf{A}}}\big\}^{-1} \{\mu \textbf I + (1-\mu) \hat{\tilde{\textbf{A}}} \} \textbf H, \\
	a&nd \quad \textbf H = f_{\theta}(\textbf X).
\end{aligned}
\end{equation}

Here we first get a non-linear transformation result $\textbf{H}$ on feature \textbf{X} with an MLP network $f_{\theta}(\cdot)$, and use the designed propagation matrix $\{\{\mu + 1/\alpha -1\} \textbf I + \{2-\mu-1/\alpha\} \hat{\tilde{\textbf{A}}}\big\}^{-1}$ to propagate both $\textbf{H}$ and $\textbf{A}\textbf{H}$, then we can get the representation encoding feature information from both original and low-frequency spaces.

\textbf{GNN-LF-iter} \quad Using the iter-form propagation mechanism in Eq. (\ref{1_eq_V1iter}), we can design a deep and computationally efficient graph neural network with $\mu \in [1/2, 1), \alpha \in (0, 2/3)$:
\begin{equation}\label{1_eq_V1design2}
	\begin{aligned}
		\textbf{Z} &= \textbf{PROPAGATE}(\textbf{X}; \mathcal{G}; K)_{LF-iter}\\
		\quad &= \bigg\langle \frac{1 +\alpha\mu - 2\alpha}{1+\alpha\mu -\alpha} \hat{\tilde{\textbf{A}}} \textbf Z^{(k)} + \frac{\alpha\mu}{1+\alpha\mu -\alpha} \textbf H + \frac{\alpha - \alpha \mu}{1+\alpha\mu -\alpha} \hat{\tilde{\textbf{A}}} \textbf H\bigg\rangle_K,\\
		 \textbf Z&^{(0)} = \frac{\mu}{1+\alpha\mu -\alpha} \textbf H + \frac{1 - \mu}{1+\alpha\mu -\alpha} \hat{\tilde{\textbf{A}}} \textbf H, \quad and \quad \textbf H = f_{\theta}(\textbf X).\\
	\end{aligned}
\end{equation}
We directly use the $K$-layer output as the propagation results. This iterative propagation mechanism can be viewed as layer-wise $\hat{\tilde{\textbf{A}}}$ based neighborhood aggregation, with residual connection on feature matrix $\textbf{H}$ and filtered feature matrix $\hat{\tilde{\textbf{A}}} \textbf H$. Note that we decouple the layer-wise transformation and aggregation process like \cite{klicpera2019predict, liu2020towards}, which is beneficial to alleviate the over-smoothing problem. GNN-LF-iter and GNN-LF-closed have the following relation:

\begin{theorem} \label{GNN_LF_convergence}
	With $K\rightarrow\infty$, deep GNN-LF-iter converges to GNN-LF-closed with the same propagation result as Eq. (\ref{1_eq_V1close2}).
\end{theorem}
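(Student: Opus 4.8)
The plan is to show that the iterative map in Eq.~(\ref{1_eq_V1iter}) is a contraction whose unique fixed point is exactly the closed-form solution in Eq.~(\ref{1_eq_V1close2}). Concretely, I would write the iteration as $\textbf{Z}^{(k+1)} = \textbf{M} \textbf{Z}^{(k)} + \textbf{B}$, where $\textbf{M} = \frac{1+\alpha\mu-2\alpha}{1+\alpha\mu-\alpha}\,\hat{\tilde{\textbf{A}}}$ is the linear propagation operator and $\textbf{B} = \frac{\alpha\mu}{1+\alpha\mu-\alpha}\textbf{H} + \frac{\alpha-\alpha\mu}{1+\alpha\mu-\alpha}\hat{\tilde{\textbf{A}}}\textbf{H}$ collects the residual feature terms. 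The strategy has two halves: first identify the fixed point of this affine map with the closed solution, and second prove that iterating from the stated $\textbf{Z}^{(0)}$ actually converges to it.

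\emph{Fixed-point identification.} First I would set $\textbf{Z}^{(\infty)} = \textbf{M}\textbf{Z}^{(\infty)} + \textbf{B}$ and solve to get $\textbf{Z}^{(\infty)} = (\textbf{I}-\textbf{M})^{-1}\textbf{B}$. Substituting $\textbf{M}$ and $\textbf{B}$, the factor $\textbf{I}-\textbf{M}$ becomes a linear polynomial in $\hat{\tilde{\textbf{A}}}$, and after clearing the common denominator $1+\alpha\mu-\alpha$ and multiplying through by $\alpha$ one should recover exactly the matrix $\{\mu+1/\alpha-1\}\textbf{I} + \{2-\mu-1/\alpha\}\hat{\tilde{\textbf{A}}}$ inverted against $\{\mu\textbf{I}+(1-\mu)\hat{\tilde{\textbf{A}}}\}\textbf{H}$, matching Eq.~(\ref{1_eq_V1close2}). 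This is a routine algebraic verification that the fixed point coincides with the closed form; it also confirms the earlier-derived closed solution is self-consistent with the iteration's construction.

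\emph{Convergence.} The decisive step is spectral. Since $\hat{\tilde{\textbf{A}}}$ is symmetric with eigenvalues in $[-1,1]$, the operator norm of $\textbf{M}$ equals $\left|\frac{1+\alpha\mu-2\alpha}{1+\alpha\mu-\alpha}\right|$. I would show this quantity is strictly less than $1$ for the admissible ranges $\mu\in[1/2,1)$ and $\alpha\in(0,2/3)$: the numerator and denominator are both positive (the constraint $\alpha<2/3$ with $\mu\ge 1/2$ keeps $1+\alpha\mu-2\alpha>0$), and the denominator exceeds the numerator precisely because $\alpha>0$. Writing the error $\textbf{E}^{(k)} = \textbf{Z}^{(k)} - \textbf{Z}^{(\infty)}$, the affine iteration gives $\textbf{E}^{(k)} = \textbf{M}^k \textbf{E}^{(0)}$, so $\|\textbf{E}^{(k)}\|_F \le \|\textbf{M}\|^k\,\|\textbf{E}^{(0)}\|_F \to 0$ as $K\to\infty$, establishing convergence to the closed-form result regardless of the starting point.

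\emph{Main obstacle.} The one place requiring genuine care is the sign/magnitude analysis of the spectral radius of $\textbf{M}$ over the full admissible parameter box, especially verifying that the ratio stays strictly below $1$ in absolute value even as $\mu\to 1$ or $\alpha\to 2/3$, where the numerator approaches zero or the gap between numerator and denominator narrows. Establishing $\rho(\textbf{M})<1$ uniformly is exactly what the parameter restrictions $\mu\in[1/2,1)$ and $\alpha\in(0,2/3)$ are engineered to guarantee, and I expect this inequality check to be the crux of the argument; everything else reduces to the contraction-mapping principle applied to a symmetric linear operator.
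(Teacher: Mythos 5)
Your proof is correct, and it reaches the theorem by a route that is substantively equivalent to the paper's but packaged differently. The paper does not argue via fixed points: it uses the observation that the iteration's constant term equals $\alpha \textbf{Z}^{(0)}$ to unroll the recursion explicitly into $\textbf{Z}^{(k)} = \big\{\textbf{M}^k + \alpha\sum_{i=0}^{k-1}\textbf{M}^i\big\}\textbf{Z}^{(0)}$ (Eq. (\ref{2_eq_converproof0})), notes that $\big|\frac{1+\alpha\mu-2\alpha}{1+\alpha\mu-\alpha}\big|<1$ makes the first term vanish and the second a convergent geometric (Neumann) series, and sums that series to land on Eq. (\ref{1_eq_V1close2}). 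You instead identify the unique fixed point $(\textbf{I}-\textbf{M})^{-1}\textbf{B}$ of the affine map with the closed form (the same algebra of clearing the denominator $1+\alpha\mu-\alpha$ and rescaling by $\alpha$), and then show the error $\textbf{E}^{(k)}=\textbf{M}^{k}\textbf{E}^{(0)}$ contracts to zero. Both arguments hinge on exactly the same inequality, and your "main obstacle" is precisely the check the paper performs: on the box $\mu\in[1/2,1)$, $\alpha\in(0,2/3)$ the numerator $1-\alpha(2-\mu)$ and denominator $1-\alpha(1-\mu)$ are positive and differ by exactly $\alpha$, so the ratio lies strictly in $(0,1)$; combined with $\|\hat{\tilde{\textbf{A}}}\|_2\le 1$ this gives $\rho(\textbf{M})<1$ uniformly. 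What your framing buys: convergence from an arbitrary initialization (the paper's limit is tied to the prescribed $\textbf{Z}^{(0)}$), and no need to spot the identity $\textbf{B}=\alpha\textbf{Z}^{(0)}$ that makes the paper's expansion compact. What the paper's framing buys: the explicit finite-$k$ expansion (\ref{2_eq_converproof0}) is not just a stepping stone to the limit but is reused verbatim in Section \ref{sec::Expressive} to read off the polynomial filter coefficients of GNN-LF, so the unrolled series does double duty that a pure contraction argument does not.
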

\begin{proof} \label{proof_GNN_LF}
	After the $K$-layer propagation using GNN-LF-iter, the corresponding expansion result can be written as:
	\begin{equation}\label{2_eq_converproof0}
		\begin{aligned}
			\textbf Z ^ {(k)} = & \bigg\{(\frac{1 +\alpha\mu - 2\alpha}{1+\alpha\mu -\alpha})^k \hat{\tilde{\textbf{A}}}^k + \alpha \sum_{i=0}^{k-1}(\frac{1 +\alpha\mu - 2\alpha}{1+\alpha\mu -\alpha})^i \hat{\tilde{\textbf{A}}}^i\bigg\}\bigg\{\frac{\mu}{1+\alpha\mu -\alpha} \textbf H \\
			&+ \frac{1 - \mu}{1+\alpha\mu -\alpha} \hat{\tilde{\textbf{A}}}\textbf H\bigg\},
		\end{aligned}
	\end{equation}
	where $\mu \in [1/2, 1)$, $\alpha \in (0, 2/3)$ and $|\frac{1 +\alpha\mu - 2\alpha}{1+\alpha\mu -\alpha}|< 1 $. When $k \rightarrow \infty$, the left term tends to 0 and the right term becomes a geometric series. The series converges since $\hat{\tilde{\textbf{A}}}$ has absolute eigenvalues bounded by 1, then Eq. (\ref{2_eq_converproof0}) can be rewritten as:
	\begin{equation}\label{2_eq_converproof6}
		\textbf Z^{(\infty)} = \big\{\{\mu + 1/\alpha -1\} \textbf I + \{2-\mu-1/\alpha\} \hat{\tilde{\textbf{A}}}\big\}^{-1} \{\mu \textbf I + (1-\mu) \hat{\tilde{\textbf{A}}} \} \textbf H,
	\end{equation}
	which exactly is the equation for calculating GNN-LF-closed.
\end{proof}

The training of GNN-LF is also the same with other GNNs. For example, it evaluates the cross-entropy loss over all labeled examples for semi-supervised multi-class classification task.

\subsection{GNN with High-pass Filtering Kernel}
\subsubsection{Objective Function} \quad Similar with GNN-LF, we now consider preserving the similarity of \textbf{H} and \textbf{Z} in both original and high-pass filtering spaces. For neatness of the subsequent analysis, we choose the following objective:
\begin{theorem}
	With $\textbf{F}_1 = \textbf{F}_2 =  \{\textbf I + \beta \tilde{\textbf{L}} \}^{1/2}, \beta \in (0, \infty)$, $\zeta = 1$ and $\xi = 1/\alpha - 1, \alpha \in (0, 1]$ in Eq. (\ref{total_framework}), the propagation process considering flexible high-pass convolutional kernel on feature is:
	\begin{equation}\label{1_eq_V1obj2}
		\mathcal{O} = \min \limits_{\textbf{Z}} \big\{\big\| \{\textbf I + \beta \tilde{\textbf{L}} \}^{1/2} (\textbf Z - \textbf H) \big\|_F^2 + \xi tr(\textbf Z^{T} \tilde{\textbf{L}} \textbf Z)\big\}, \\
	\end{equation}
	where $\textbf{H}= f_{\theta}(\textbf X)$.
\end{theorem}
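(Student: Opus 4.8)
The plan is to prove the statement by direct substitution into the unified framework of Eq.~(\ref{total_framework}), since the theorem merely records the objective that this framework produces under the stated parameter choices. First I would write Eq.~(\ref{total_framework}) with the general kernels, $\mathcal{O} = \min_{\textbf{Z}}\{\zeta\|\textbf{F}_1\textbf{Z} - \textbf{F}_2\textbf{H}\|_F^2 + \xi\, tr(\textbf{Z}^T\tilde{\textbf{L}}\textbf{Z})\}$, and insert $\textbf{F}_1 = \textbf{F}_2 = \{\textbf I + \beta\tilde{\textbf{L}}\}^{1/2}$ together with $\zeta = 1$. Because the two kernels coincide, the fitting term $\{\textbf I + \beta\tilde{\textbf{L}}\}^{1/2}\textbf{Z} - \{\textbf I + \beta\tilde{\textbf{L}}\}^{1/2}\textbf{H}$ factors as $\{\textbf I + \beta\tilde{\textbf{L}}\}^{1/2}(\textbf{Z} - \textbf{H})$, which is exactly the fitting term of Eq.~(\ref{1_eq_V1obj2}); substituting $\xi = 1/\alpha - 1$ leaves the regularization term unchanged in form. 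This short chain of equalities is the whole derivation.

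The only step that genuinely needs an argument---and the one I expect to be the main (though minor) obstacle---is justifying that $\{\textbf I + \beta\tilde{\textbf{L}}\}^{1/2}$ is a well-defined symmetric matrix, so that the factoring above is legitimate and the kernel has a sensible spectral meaning. For this I would reuse the spectral facts already exploited in Theorem~\ref{ppnp_theorem}: $\tilde{\textbf{L}}$ is symmetric positive semi-definite with eigenvalues $\lambda_i \in [0,2)$, so writing its eigendecomposition $\tilde{\textbf{L}} = \textbf{V}\pmb{\Lambda}\textbf{V}^T$, the matrix $\textbf I + \beta\tilde{\textbf{L}} = \textbf{V}(\textbf I + \beta\pmb{\Lambda})\textbf{V}^T$ has eigenvalues $1 + \beta\lambda_i \geq 1 > 0$ for every $\beta \in (0,\infty)$. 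Hence $\textbf I + \beta\tilde{\textbf{L}}$ is symmetric positive definite and admits the unique symmetric positive-definite square root $\{\textbf I + \beta\tilde{\textbf{L}}\}^{1/2} = \textbf{V}(\textbf I + \beta\pmb{\Lambda})^{1/2}\textbf{V}^T$, whose eigenvalues $\sqrt{1+\beta\lambda_i}$ increase with $\lambda_i$ and therefore amplify high-frequency components, matching the \emph{high-pass} description in the statement.

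Finally I would check that the parameter ranges keep the objective well-posed, which is what makes it meaningful to call Eq.~(\ref{1_eq_V1obj2}) a minimization problem with a genuine solution: for $\alpha \in (0,1]$ we have $\xi = 1/\alpha - 1 \in [0,\infty)$, so the regularization weight is non-negative and $\xi\, tr(\textbf{Z}^T\tilde{\textbf{L}}\textbf{Z})$ is convex; combined with the convex squared-norm fitting term, $\mathcal{O}$ is a convex quadratic in $\textbf{Z}$ and admits a unique minimizer. That minimizer is precisely what the forthcoming Closed Solution and Iterative Approximation subsections will compute---by setting the derivative to zero, and by power iteration, respectively---so the present theorem is complete once the substitution and the square-root well-definedness are established; everything else is routine algebra.
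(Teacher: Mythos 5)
Your proposal is correct and follows essentially the same route as the paper, which treats this theorem as a direct instantiation of Eq.~(\ref{total_framework}) and, in the accompanying text, justifies the kernel exactly as you do: via the eigendecomposition $\textbf I + \beta \tilde{\textbf{L}} = \textbf{\textit{V}}^{*} \pmb{\Lambda}^{*} \textbf{\textit{V}}^{*T}$ with $\beta \in (0,\infty)$ guaranteeing symmetry and positive (semi-)definiteness, so that $\{\textbf I + \beta \tilde{\textbf{L}}\}^{1/2} = \textbf{\textit{V}}^{*} \pmb{\Lambda}^{*1/2} \textbf{\textit{V}}^{*T}$ is well defined and high-pass. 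Your additional convexity and well-posedness check is a harmless strengthening that the paper only addresses implicitly when deriving the closed-form solution in Section~\ref{4.2.2}.
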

Analogously, $\beta$ is also a balance coefficient, and we set  $\beta \in (0, \infty)$ so that $\textbf I + \beta \tilde{\textbf{L}} = \textbf{\textit{V}}^{*} \pmb{\Lambda}^{*} \textbf{\textit{V}}^{*T}$ is a symmetric and positive semi-definite matrix and the matrix $\{\textbf I + \beta \tilde{\textbf{L}} \}^{1/2} = \textbf{\textit{V}}^{*} \pmb{\Lambda}^{*1/2} \textbf{\textit{V}}^{*T}$ has a filtering behavior similar to that of $\{\textbf I + \beta \tilde{\textbf{L}}\}$. As can be seen in Eq. (\ref{1_eq_V1obj2}), by adjusting the balance coefficient $\beta$, the designed objectives can flexibly constrain the similarity of $\textbf{Z}$ and $\textbf{H}$ in both original and high-frequency spaces.

\subsubsection{Closed Solution}\label{4.2.2} \quad We calculate the closed-form solution as:
\begin{equation}\label{2_eq_V2close1}
\textbf{Z} =\big\{\textbf I + (\beta + 1/\alpha -1) \tilde{\textbf{L}} \big\}^{-1} \{\textbf I + \beta \tilde{\textbf{L}} \}\textbf H,
\end{equation}
it also can be rewritten as:
\begin{equation}\label{2_eq_V2close2}
	\textbf{Z} =\big\{(\beta + 1/\alpha) \textbf I + (1 - \beta - 1/\alpha) \hat{\tilde{\textbf{A}}} \big\}^{-1} \{\textbf I + \beta \tilde{\textbf{L}} \}\textbf H.
\end{equation}

\subsubsection{Iterative Approximation}\label{4.2.3} \quad Considering it is inefficient to calculate the inverse matrix, we give the following iterative approximation solution without constructing the dense inverse matrix:
\begin{equation}\label{2_eq_V2iter}
\textbf Z^{(k+1)} = \frac{\alpha \beta - \alpha + 1}{\alpha \beta + 1} \hat{\tilde{\textbf{A}}} \textbf Z^{(k)} + \frac{\alpha}{\alpha \beta + 1} \textbf H + \frac{\alpha \beta}{\alpha \beta + 1} \tilde{\textbf{L}} \textbf H.
\end{equation}

\subsubsection{Model Design} \quad With the derived two propagation strategies in Eq. (\ref{2_eq_V2close2}) and in Eq. (\ref{2_eq_V2iter}), we propose two new GNNs in both \textbf{closed} and \textbf{iter}ative forms. Similarly, we use \textbf{GNN-HF} to denote \textbf{GNN} with \textbf{H}igh-pass \textbf{F}iltering graph convolutional kernels.

\textbf{GNN-HF-closed} \quad Using the closed-form propagation matrix in Eq. (\ref{2_eq_V2close2}), we define the following new graph neural networks with closed-form propagation mechanism:
\begin{equation}\label{1_eq_V2design1}
	\begin{aligned}
	\textbf{Z} &= \textbf{PROPAGATE}(\textbf{X}; \mathcal{G}; \infty)_{HF-closed}\\
	&=\big\{(\beta + 1/\alpha) \textbf I + (1 - \beta - 1/\alpha) \hat{\tilde{\textbf{A}}} \big\}^{-1} \{\textbf I + \beta \tilde{\textbf{L}} \}\textbf H,\\
	a&nd \quad \textbf H = f_{\theta}(\textbf X).
	\end{aligned}
\end{equation}
Note that $\beta \in (0, \infty)$ and $\alpha \in (0, 1]$. By applying the propagation matrix $\{(\beta + 1/\alpha) \textbf I + (1 - \beta - 1/\alpha) \hat{\tilde{\textbf{A}}}\}^{-1}$ directly on both $\textbf{H}$ and $ \tilde{\textbf{L}} \textbf H$ matrix, then we can get the representation encoding feature information from both original and high-frequency spaces.

\textbf{GNN-HF-iter} \quad Using the iterative propagation mechanism in Section \ref{4.2.3}, we have a deep and computationally efficient graph neural networks with $\beta \in (0, \infty)$ and $\alpha \in (0, 1]$.
\begin{equation}\label{1_eq_V2design2}
	\begin{aligned}
		\textbf{Z} &= \textbf{PROPAGATE}(\textbf{X}; \mathcal{G}; K)_{HF-iter}\\
		&= \bigg\langle \frac{\alpha \beta - \alpha + 1}{\alpha \beta + 1} \hat{\tilde{\textbf{A}}} \textbf Z^{(k)} + \frac{\alpha}{\alpha \beta + 1} \textbf H + \frac{\alpha \beta}{\alpha \beta + 1} \tilde{\textbf{L}} \textbf H \bigg\rangle_K,\\
		\textbf Z&^ {(0)} = \frac{1}{\alpha \beta + 1} \textbf H + \frac{\beta}{\alpha \beta + 1} \tilde{\textbf{L}} \textbf H, \quad and \quad \textbf H = f_{\theta}(\textbf X).\\
	\end{aligned}
\end{equation}
We directly use the $K$-layer output as the propagation results. Similarly, this iterative propagation mechanism can be viewed as layer-wise $\hat{\tilde{\textbf{A}}}$ based neighborhood aggregation, and residual connection on both feature matrix $\textbf{H}$ and high-frequency filtered feature matrix $\tilde{\textbf{L}} \textbf H$. And we also decouple the layer-wise transformation and aggregation process during propagation. GNN-HF-iter and GNN-HF-closed have the following relation:
\begin{theorem} \label{GNN_HF_convergence}
When $K\rightarrow\infty$, deep GNN-HF-iter converges to GNN-HF-closed with the same propagation result as Eq. (\ref{2_eq_V2close2}).
\end{theorem}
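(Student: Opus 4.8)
The plan is to follow the proof of Theorem~\ref{GNN_LF_convergence} verbatim in structure, since the GNN-HF-iter recurrence in Eq.~(\ref{2_eq_V2iter}) has the same algebraic form as its low-pass counterpart. First I would note that its forcing term equals $\alpha\,\textbf{Z}^{(0)}$, because $\frac{\alpha}{\alpha\beta+1}\textbf{H}+\frac{\alpha\beta}{\alpha\beta+1}\tilde{\textbf{L}}\textbf{H}=\alpha\big(\frac{1}{\alpha\beta+1}\textbf{H}+\frac{\beta}{\alpha\beta+1}\tilde{\textbf{L}}\textbf{H}\big)$. Writing $c=\frac{\alpha\beta-\alpha+1}{\alpha\beta+1}$, the iteration reduces to $\textbf{Z}^{(k+1)}=c\,\hat{\tilde{\textbf{A}}}\,\textbf{Z}^{(k)}+\alpha\,\textbf{Z}^{(0)}$, which unrolls to
\[
\textbf{Z}^{(k)}=\Big\{c^{k}\hat{\tilde{\textbf{A}}}^{k}+\alpha\sum_{i=0}^{k-1}c^{i}\hat{\tilde{\textbf{A}}}^{i}\Big\}\textbf{Z}^{(0)},
\]
the exact analogue of Eq.~(\ref{2_eq_converproof0}).

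Next I would justify passing to the limit. Rewriting $c=1-\frac{\alpha}{\alpha\beta+1}$ shows $c<1$, and noting $\alpha\beta-\alpha+1=1-\alpha(1-\beta)>0$ for $\alpha\in(0,1]$, $\beta\in(0,\infty)$ shows $c>0$; hence $c\in(0,1)$ and $|c|<1$. Since $\hat{\tilde{\textbf{A}}}$ has eigenvalues bounded by $1$ in absolute value (as already exploited for JKNet and GNN-LF), $c\hat{\tilde{\textbf{A}}}$ is a strict contraction, so as $k\to\infty$ the term $c^{k}\hat{\tilde{\textbf{A}}}^{k}$ vanishes and the Neumann series converges, giving $\alpha\sum_{i=0}^{\infty}c^{i}\hat{\tilde{\textbf{A}}}^{i}=\alpha(\textbf{I}-c\hat{\tilde{\textbf{A}}})^{-1}$. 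Thus $\textbf{Z}^{(\infty)}=\alpha(\textbf{I}-c\hat{\tilde{\textbf{A}}})^{-1}\textbf{Z}^{(0)}$.

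Finally I would perform the closing simplification. Substituting $\textbf{Z}^{(0)}=\frac{1}{\alpha\beta+1}\{\textbf{I}+\beta\tilde{\textbf{L}}\}\textbf{H}$ and factoring $\frac{1}{\alpha\beta+1}$ out of $\textbf{I}-c\hat{\tilde{\textbf{A}}}$, the scalar prefactors $\alpha\beta+1$ cancel; scaling the surviving bracket by $1/\alpha$ then turns the coefficients $\alpha\beta+1$ and $\alpha\beta-\alpha+1$ into $\beta+1/\alpha$ and $\beta+1/\alpha-1$, so that $\textbf{Z}^{(\infty)}=\{(\beta+1/\alpha)\textbf{I}+(1-\beta-1/\alpha)\hat{\tilde{\textbf{A}}}\}^{-1}\{\textbf{I}+\beta\tilde{\textbf{L}}\}\textbf{H}$, which is exactly Eq.~(\ref{2_eq_V2close2}).

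The only step requiring care is the spectral bookkeeping: whereas the GNN-LF proof had to constrain $\alpha\in(0,2/3)$ to keep its contraction factor in $(-1,1)$, here I must confirm that $c$ stays strictly inside $(0,1)$ over the whole range $\alpha\in(0,1]$, $\beta\in(0,\infty)$, which simultaneously guarantees the vanishing of $c^{k}\hat{\tilde{\textbf{A}}}^{k}$ and the convergence of the Neumann series. Once that is settled, the final matrix identity is pure rearrangement.
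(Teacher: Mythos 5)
Your proposal is correct and takes essentially the same route as the paper's own proof: unroll the recurrence into $\big\{c^{k}\hat{\tilde{\textbf{A}}}^{k}+\alpha\sum_{i=0}^{k-1}c^{i}\hat{\tilde{\textbf{A}}}^{i}\big\}\textbf{Z}^{(0)}$ with $c=\frac{\alpha\beta-\alpha+1}{\alpha\beta+1}$, let the leading term vanish and sum the geometric (Neumann) series using $|c|<1$ together with the eigenvalue bound on $\hat{\tilde{\textbf{A}}}$, and then rearrange the resulting inverse into Eq.~(\ref{2_eq_V2close2}). The only difference is one of rigor, in your favor: you explicitly check $c\in(0,1)$ over the whole range $\alpha\in(0,1]$, $\beta\in(0,\infty)$, whereas the paper simply asserts $|c|<1$ and defers the remaining steps to the GNN-LF argument.
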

\begin{proof}
	Please refer to Appendix \ref{sec::conver_HF}.
\end{proof}

\section{Spectral Expressive Power Analysis}\label{sec::Expressive}
In this section, we study several propagation mechanisms in graph spectral domain to examine their expressive power. A polynomial filter of order $K$ on a graph signal $\textbf x $ is defined as $\big(\sum_{k=0}^{K} \theta_k \tilde{\textbf{L}}^k\big) \textbf x$, where $\theta_k$ is the corresponding polynomial coefficients. Similar with \cite{chen2020simple}, we also assume that the graph signal \textbf{x} is non-negative and \textbf{x} can be converted into the input signal \textbf{H} under linear transformation. We aim to compare the polynomial coefficients $\theta_k$ for different GNNs and show that GNN-LF/HF with $K$ order flexible polynomial filter coefficients have better spectral expressive power. For concision, we mainly analyze the filter coefficients of GNN-LF, and the spectral analysis of SGC, PPNP, GNN-HF is in Appendix \ref{sec::Polynomial Filter}.

\subsection{Filter Coefficient Analysis} \label{filter}
\textbf{Analysis of GNN-LF.} \quad From the analysis in Theorem \ref{GNN_LF_convergence}, we have the expanded propagation result of GNN-LF-iter in Eq. (\ref{2_eq_converproof0}), which has been proved to converge to the propagation result of GNN-LF-closed with $K \rightarrow \infty$. Taking this propagation result for analysis, we have the following filtering expression when $K \rightarrow \infty$:
\begin{equation}\label{4_eq_V1filter1}
	\begin{aligned}
		\textbf z ^ {(K)} &= \Big\{\alpha \sum_{i=0}^{K-1}\big(\frac{1 +\alpha\mu - 2\alpha}{1+\alpha\mu -\alpha}\big)^i \hat{\tilde{\textbf{A}}}^i\Big\}\Big\{\frac{\mu}{1+\alpha\mu -\alpha} \textbf x + \frac{1 - \mu}{1+\alpha\mu -\alpha} \hat{\tilde{\textbf{A}}}\textbf x\Big\} \\
		&= \frac{\alpha\mu}{1+\alpha\mu -\alpha} \Big\{\sum_{i=0}^{K-1}\big(\frac{1 +\alpha\mu - 2\alpha}{1+\alpha\mu -\alpha}\big)^i \hat{\tilde{\textbf{A}}}^i \Big\} \textbf x + \frac{\alpha - \alpha\mu}{1+\alpha\mu -2\alpha} \bm{\cdot} \\
		&\Big\{\sum_{i=1}^{K}\big(\frac{1 +\alpha\mu - 2\alpha}{1+\alpha\mu -\alpha}\big)^i \hat{\tilde{\textbf{A}}}^i \Big\}\textbf x \\
		&= \frac{\alpha\mu}{1+\alpha\mu -\alpha} \Big\{\sum_{i=0}^{K-1}\big(\frac{1 +\alpha\mu - 2\alpha}{1+\alpha\mu -\alpha}\big)^i \big(\textbf I - \tilde{\textbf{L}}\big)^i \Big\} \textbf x + \frac{\alpha - \alpha\mu}{1+\alpha\mu -2\alpha} \bm{\cdot} \\
		&\Big\{\sum_{i=1}^{K}\big(\frac{1 +\alpha\mu - 2\alpha}{1+\alpha\mu -\alpha}\big)^i \big(\textbf I - \tilde{\textbf{L}}\big)^i \Big\}\textbf x.
	\end{aligned}
\end{equation}
Expand the above equation, then the filter coefficients on $\tilde{\textbf{L}}^k (k \in [0, K])$ can be summarized into the following forms:
\begin{itemize}
	\item[1)] \textit{Filter coefficients for} $\tilde{\textbf{L}}^0$:
\begin{equation}\label{4_eq_V1filter2}
	\begin{aligned}
		&\theta_{0} = \frac{\alpha \mu (1 + \alpha \mu -2 \alpha)}{(1 + \alpha \mu -\alpha)^2} + \frac{(\alpha - \alpha\mu)(1 + \alpha\mu -2\alpha)^{K-1}}{(1 + \alpha\mu -\alpha)^{K}} + \sum\limits_{j=1}^{K-1} \delta_j\binom{j}{0}, \\
		&\delta_j = \big\{\frac{\alpha \mu}{1 + \alpha \mu - \alpha} + \frac{\alpha - \alpha\mu}{1 + \alpha\mu -2\alpha}\big\} \big(\frac{1 + \alpha \mu -2 \alpha}{1 + \alpha \mu -\alpha}\big)^j.
	\end{aligned}
\end{equation}
	\item[2)] \textit{Filter coefficients for} $\tilde{\textbf{L}}^k, k \in [1, K-1]$:
\begin{equation}\label{4_eq_V1filter3}
	\begin{aligned}
		\theta_{k} &= \sum\limits_{j=k}^{K} \delta_j(-1)^k\binom{j}{k}, \\
		\delta_j &= \big\{\frac{\alpha \mu}{1 + \alpha \mu - \alpha} + \frac{\alpha - \alpha\mu}{1 + \alpha\mu -2\alpha}\big\} \big(\frac{1 + \alpha \mu -2 \alpha}{1 + \alpha \mu -\alpha}\big)^j.
	\end{aligned}
\end{equation}
	\item[3)] \textit{Filter coefficients for} $\tilde{\textbf{L}}^K$:
\begin{equation}\label{4_eq_V1filter4}
	\theta_{K} = \frac{(\alpha - \alpha\mu)(1 + \alpha\mu -2\alpha)^{K-1}}{(1 + \alpha\mu -\alpha)^{K}}(-1)^K\binom{K}{K}.
\end{equation}
\end{itemize}
From the above analysis result on GNN-LF, we find the expression forms of filter coefficients depend on different $k$ and are determined by two adjustable factors $\alpha$ and $\mu$, which improve the expressive power of the spectral filters and further alleviate the over-smoothing problem.

\textbf{Analysis of SGC/PPNP/GNN-HF.} \quad Note that the analysis of GNN-HF is similar with that of GNN-LF, for concision, we show them in Appendix \ref{sec::Polynomial Filter}.

\subsection{Discussion on Expressive Power}
As \cite{xu2018representation, liu2020towards} point out, the reason for the over-smoothing problem is that typical GCN converges to the limit distribution of random walk which is isolated from the input feature and makes node representations inseparable as the number of layer increases. \cite{chen2020simple} also gives another understanding from the view of polynomial filtering coefficient and points out that flexible and arbitrary filter coefficients are essential for preventing over-smoothing.

From the filter coefficients shown in Section \ref{filter} and Appendix \ref{sec::Polynomial Filter}, we can find that: 1) SGC or $K$-layer graph convolutional operations have fixed constant filtering coefficients, which limit the expressive power and further lead to over-smoothing. 2) PPNP has a better filtering expressive ability against SGC (GCN) since the filter coefficients of the order $k$ is changeable along with the factor $\alpha$. 3) Comparing with PPNP and SGC (GCN), GNN-LF/HF are more expressive under the influence of adjustable factors $\alpha$, $\mu$  or $\beta$, which increase the ability to fit arbitrary coefficients of polynomial filter, and help GNN-LF/HF to alleviate the over-smoothing problem.

From the limit distributions of PPNP \cite{klicpera2019predict}, GNN-LF in Eq. (\ref{2_eq_converproof6}), GNN-HF in Eq. (\ref{3_eq_converproof6}), we can also find that all of them converge to a distribution carrying information from both input feature and network structure. This property additionally helps to reduce the effects of over-smoothing on PPNP/GNN-LF/GNN-HF even if the number of layers goes to infinity.

\section{Experiments}\label{sec::Experiments}
\begin{table}[!t]
	\centering
	\caption{The statistics of the datasets}
	\label{dataset}
	\renewcommand\arraystretch{1.12}
	\setlength{\tabcolsep}{0.6mm}{
		\begin{tabular}{lcrrrc}
			\hline
			\textbf{Dataset}&\textbf{Classes}&\textbf{Nodes}&\textbf{Edges}&\textbf{Features}&\textbf{Train}/\textbf{Val}/\textbf{Test}\\
			\hline
			\textbf{Cora}&7&2708&5429&1433&140/500/1000\\
			\textbf{Citeseer}&6&3327&4732&3703&120/500/1000\\
			\textbf{Pubmed}&3&19717&44338&500&60/500/1000\\
			\textbf{ACM}&3&3025&13128&1870&60/500/1000\\
			\textbf{Wiki-CS}&10&11701&216123&300&200/500/1000\\
			\textbf{MS Academic}&15&18333&81894&6805&300/500/1000\\
			\hline
	\end{tabular}}
\end{table}

\subsection{Experimental Setup}
\textbf{Dataset.} \quad To evaluate the effectiveness of our proposed GNN-LF/HF, we conduct experiments on six benchmark datasets in Table \ref{dataset}. 1) \textbf{Cora, Citeseer, Pubmed} \cite{kipf2017semi}: Three standard citation networks where nodes represent documents, edges are citation links and features are the bag-of-words representation of the document. 2) \textbf{ACM} \cite{wang2019heterogeneous}: Nodes represent papers and there is an edge if two paper have same authors. Features are the bag-of-words representations of paper keywords. The three classes are \textit{Database}, \textit{Wireless Communication}, \textit{DataMining}.  3) \textbf{Wiki-CS} \cite{mernyei2020wiki}: A dataset derived from Wikipedia, in which nodes represent CS articles, edges are hyperlinks and different classes mean different branches of the files. 4) \textbf{MS Academic} \cite{klicpera2019predict}: A co-authorship Microsoft Academic Graph, where nodes represent authors, edges are co-authorships and node features represent keywords from authors' papers.

\begin{table*}[!h]
	\centering
	\renewcommand\arraystretch{1.2}
	\caption{Node classification results (\%). We show the average accuracy with uncertainties showing the 95\% confidence level calculated by boot-strapping. Bold and underline are used to show the best and the runner-up results.}
	\vspace{5pt}
	\label{node classification}
	\resizebox{0.9\textwidth}{!}{
		\begin{tabular}{c|c|c|c|c|c|c}
			\hline
			\multirow{2}{*}{\textbf{Model}}&
			\multicolumn{6}{c}{\textbf{Dataset}}\\
			\cline{2-7}
			&\textbf{Cora}&\textbf{Citeseer}&\textbf{Pubmed}&\textbf{ACM}&\textbf{Wiki-CS}&\textbf{MS Academic}\\
			\hline
			\textbf{MLP}             &57.79$\pm$0.11&61.20$\pm$0.08&73.23$\pm$0.05&77.39$\pm$0.11&65.66$\pm$0.20&87.79$\pm$0.42\\
			\textbf{LP}             &71.50$\pm$0.00&50.80$\pm$0.00&72.70$\pm$0.00&63.30$\pm$0.00&34.90$\pm$0.00&74.10$\pm$0.00\\
			\textbf{ChebNet}        &79.92$\pm$0.18&70.90$\pm$0.37&76.98$\pm$0.16&79.53$\pm$1.24&63.24$\pm$1.43&90.76$\pm$0.73\\
			\textbf{GAT}            &82.48$\pm$0.31&72.08$\pm$0.41&79.08$\pm$0.22&88.24$\pm$0.38&74.27$\pm$0.63&91.58$\pm$0.25\\
			\textbf{GraphSAGE}      &82.14$\pm$0.25&71.80$\pm$0.36&79.20$\pm$0.27&87.57$\pm$0.65&73.17$\pm$0.41&91.53$\pm$0.15\\
			\textbf{IncepGCN}       &81.94$\pm$0.94&69.66$\pm$0.29&78.88$\pm$0.35&87.75$\pm$0.61&60.54$\pm$1.06&75.45$\pm$0.49\\
			\hline
			\textbf{GCN}            &82.41$\pm$0.25&70.72$\pm$0.36&79.40$\pm$0.15&88.38$\pm$0.51&71.97$\pm$0.51&92.17$\pm$0.11\\
			\textbf{SGC}            &81.90$\pm$0.23&\underline{72.21$\pm$0.22}&78.30$\pm$0.14&87.56$\pm$0.34&72.43$\pm$0.28&88.35$\pm$0.36\\
			\textbf{PPNP}           &83.34$\pm$0.20&71.73$\pm$0.30&80.06$\pm$0.20&89.12$\pm$0.17&74.53$\pm$0.36&92.27$\pm$0.23\\
			\textbf{APPNP}          &83.32$\pm$0.42&71.67$\pm$0.48&80.05$\pm$0.27&89.04$\pm$0.21&74.30$\pm$0.50&92.25$\pm$0.18\\
			\textbf{JKNet}          &81.19$\pm$0.49&70.69$\pm$0.88&78.60$\pm$0.25&88.11$\pm$0.36&60.90$\pm$0.92&87.26$\pm$0.23\\
			\hline
			\textbf{GNN-LF-closed}&83.70$\pm$0.14&71.98$\pm$0.33&80.34$\pm$0.18&89.43$\pm$0.20&\textbf{75.50$\pm$0.56}&\textbf{92.79$\pm$0.15}\\
			\textbf{GNN-LF-iter}  &83.53$\pm$0.24&71.92$\pm$0.24&80.33$\pm$0.20&89.37$\pm$0.40&\underline{75.35$\pm$0.24}&\underline{92.69$\pm$0.20}\\
			\hline
			\textbf{GNN-HF-closed}&\textbf{83.96$\pm$0.22}&\textbf{72.30$\pm$0.28}&\underline{80.41$\pm$0.25}&\underline{89.46$\pm$0.30}&74.92$\pm$0.45&92.47$\pm$0.23\\
			\textbf{GNN-HF-iter}  &\underline{83.79$\pm$0.29}&72.03$\pm$0.36&\textbf{80.54$\pm$0.25}&\textbf{89.59$\pm$0.31}&74.90$\pm$0.37&92.51$\pm$0.16\\
			\hline
	\end{tabular}}
\end{table*}

\textbf{Baselines.} \quad We evaluate the performance of GNN-LF/HF by comparing it with several baselines. 1) Traditional graph learning methods: MLP \cite{pal1992multilayer}, LP \cite{zhu2005semi}. 2) Spectral methods: ChebNet \cite{defferrard2016convolutional}, GCN \cite{kipf2017semi}; 3) Spatial methods: SGC \cite{wu2019simplifying}, GAT \cite{velickovic2018graph}, GraphSAGE \cite{hamilton2017inductive}, PPNP \cite{klicpera2019predict}. 4) Deep GNN methods: JKNet \cite{xu2018representation}, APPNP \cite{klicpera2019predict}, IncepGCN \cite{rong2020dropedge}.

\textbf{Settings.}\quad We implement GNN-LF/HF based on Pytorch \cite{paszke2017automatic}. To ensure fair comparisons, we fix the hidden size as 64 for all models. We apply $L_2$ regularization on the first layer parameter weights, with coefficients of 5e-3 on all datasets except 5e-4 for Wiki-CS.  We set the learning rate $lr=0.01$ for the other datasets except $lr=0.03$ for Wiki-CS, and set dropout rate $d = 0.5$. We use the validation set for early stopping with a patience of 100 epochs. We fix 10 propagation depth for the two iterative version of GNN-LF/HF. Note that for APPNP and PPNP, all the settings are consistent with the above descriptions. As for ChebNet, GCN, GAT, SGC and GraphSAGE, we use the implementations of DGL \cite{wang2019dgl}\footnote{https://github.com/dmlc/dgl}. For JKNet and IncepGCN, we use the implementation in \cite{rong2020dropedge}\footnote{https://github.com/DropEdge/DropEdge}. We try to turn all hyperparameters reasonably to get the best performance, so some models even achieve better results than original reports. For JKNet, IncepGCN and SGC, we choose the best results of them with no more than 10 propagation depth. We conduct 10 runs on all datasets with the fixed training/validation/test split, where 20 nodes per class are used for training and 500/1000 nodes are used for val/test. For cora/citeseer/pubmed datasets, we follow the dataset splits in \cite{yang2016revisiting}.

\subsection{Node Classification}
We evaluate the effectiveness of GNN-LF/HF against several state-of-the-art baselines on semi-supervised node classification task. We use accuracy (ACC) metric for evaluation, and report the average ACC with uncertainties showing the 95\% confidence level calculated by bootstrapping in Table \ref{node classification}. We have the following observations:

\begin{figure*}[t]
	\centering
	\subfigure[\textbf{Cora}]{
		\includegraphics[width=0.31\linewidth]{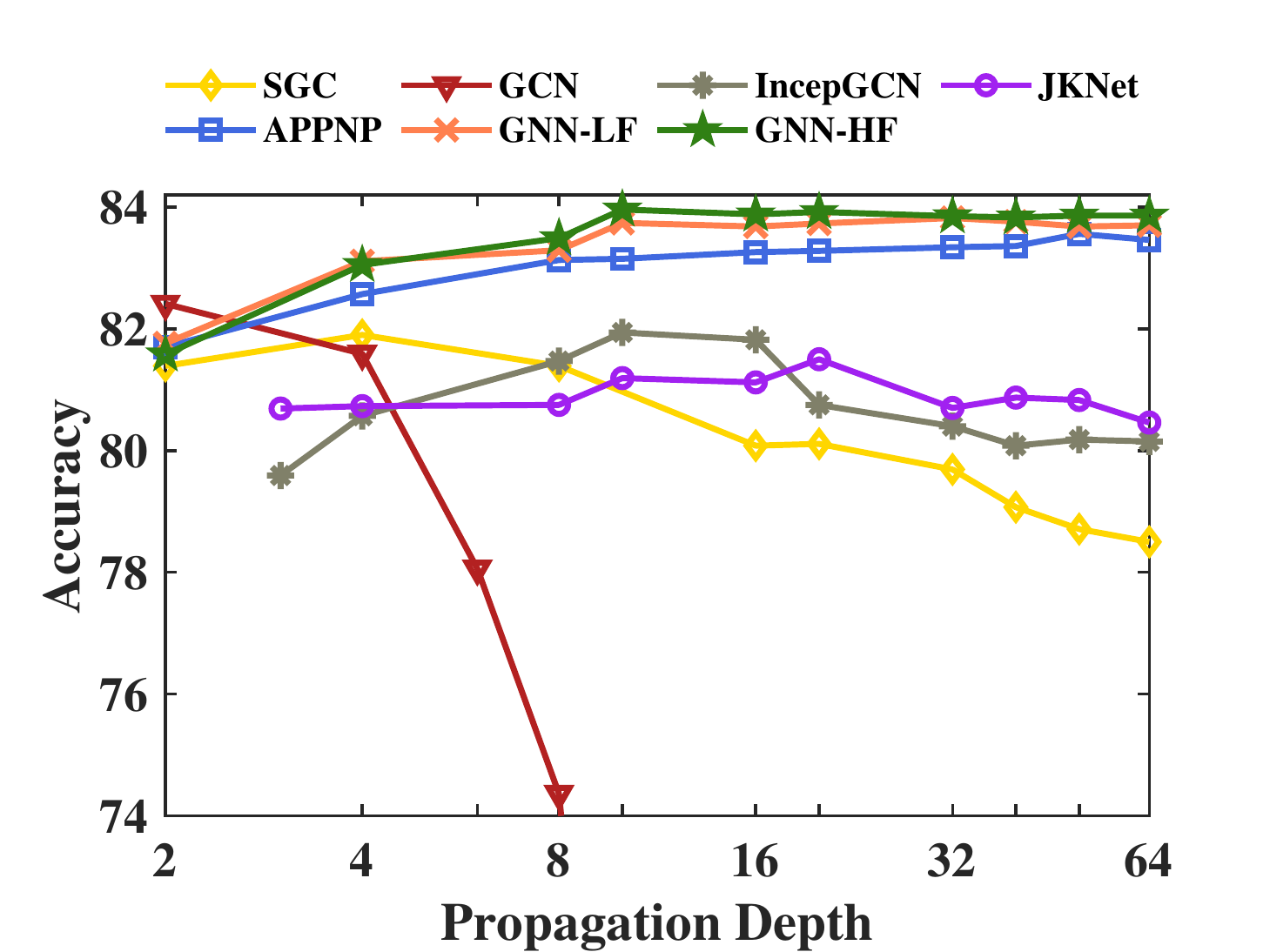}
	}
	\subfigure[\textbf{Citeseer}]{
		\includegraphics[width=0.31\linewidth]{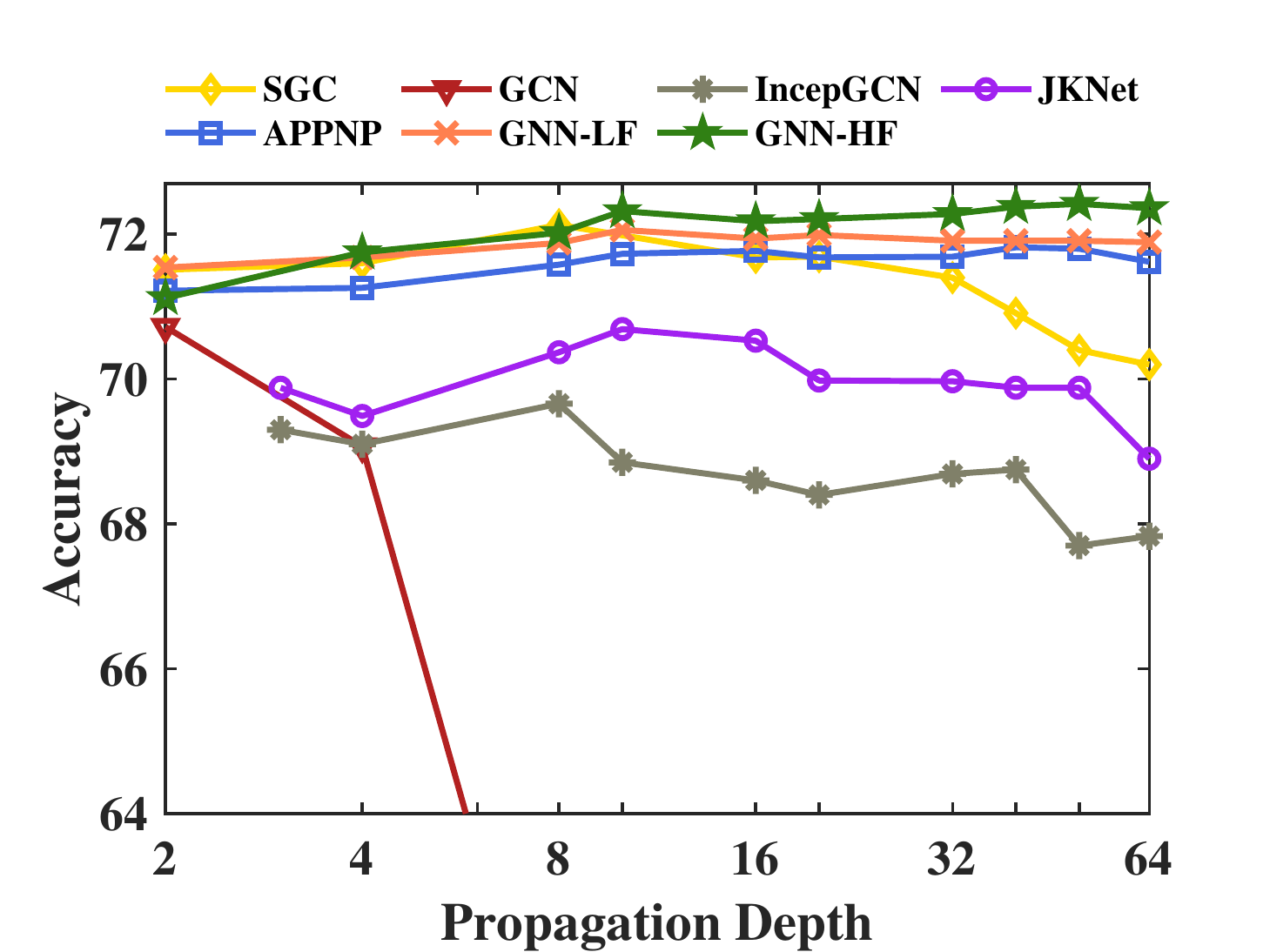}
	}
	\subfigure[\textbf{Pubmed}]{
		\includegraphics[width=0.31\linewidth]{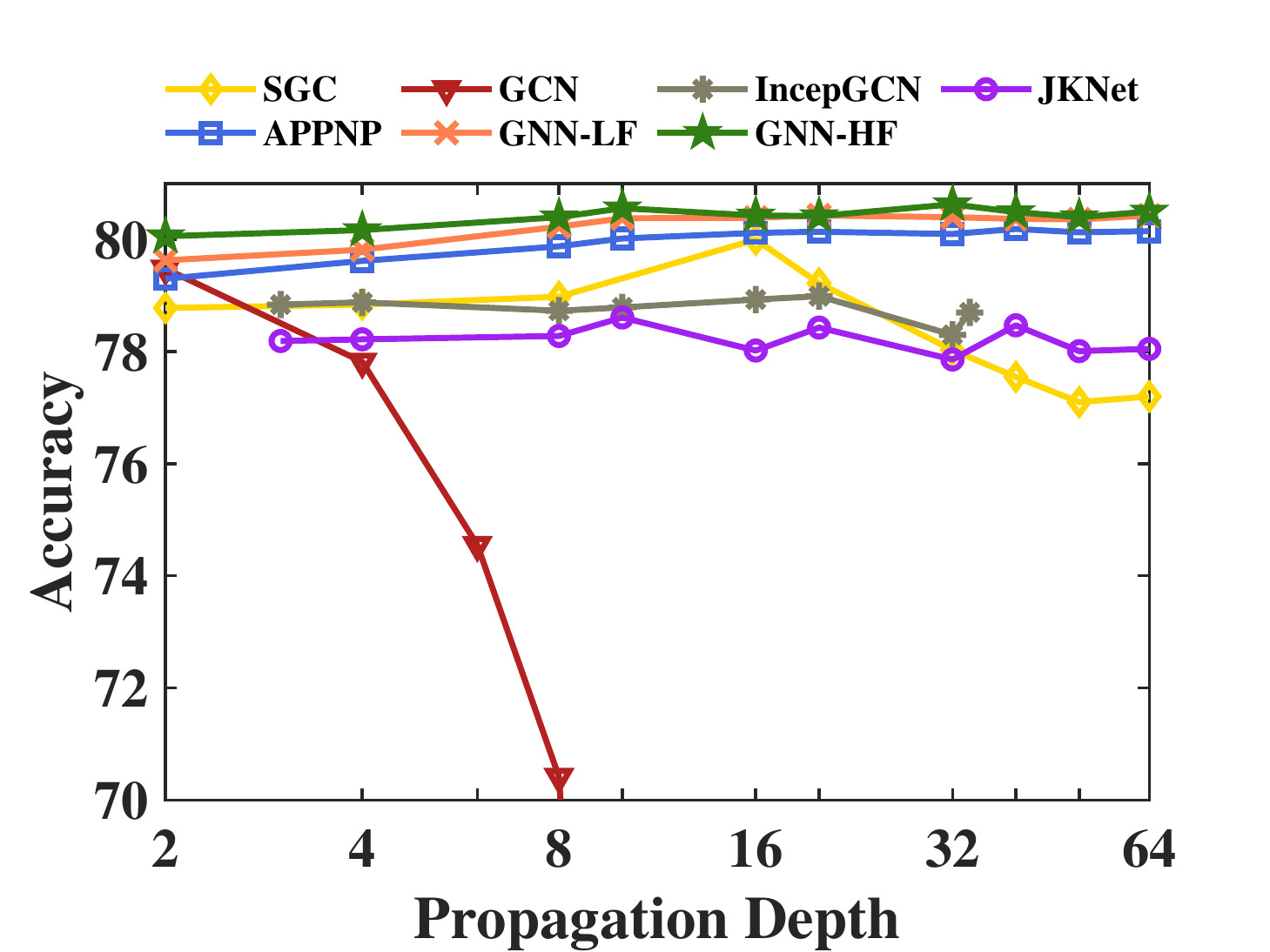}
	}
	\caption{Analysis of propagation depth.}
	\label{layer}
\end{figure*}

1) GNN-LF and GNN-HF consistently outperform all the state-of-the-art baselines on all datasets. The best and the runner-up results are always achieved by GNN-LF/HF, which demonstrates the effectiveness of our proposed model. From the perspective of the unified objective framework, it is easy to check that GNN-LF/HF not only keep the representation same with the original features, but also consider capturing their similarities based on low-frequency or high-frequency information. These two relations are balanced so as to extract more meaningful signals and thus perform better.

2) From the results of the closed and iterative versions of GNN-LF/HF, we can see that using 10 propagation depth for GNN-LF-iter/GNN-HF-iter is able to effectively approximate the GNN-LF-closed/GNN-HF-closed. As for performance comparisons between GNN-LF and GNN-HF, we find that it is hard to determine which is the best, since which filter works better may depend on the characteristic of different datasets. But in summary, flexibly and comprehensively considering multiple information in a GNN model can always achieve satisfactory results on different networks.

3) In addition, PPNP/APPNP always perform better than GCN/SGC since their objective also considers a fitting term to help find important information from features during propagation. On the other hand, APPNP outperforms JKNet mainly because that its propagation process takes full advantage of the original features and APPNP even decouples the layer-wise non-linear transformation operations \cite{liu2020towards} without suffering from performance degradation. Actually, the above differences of models and explanations for results can be easily drawn from our unified framework.

\begin{figure*}[htbp]
	\centering
	\subfigure[\textbf{GNN-LF}]{
		\includegraphics[width=0.24\linewidth]{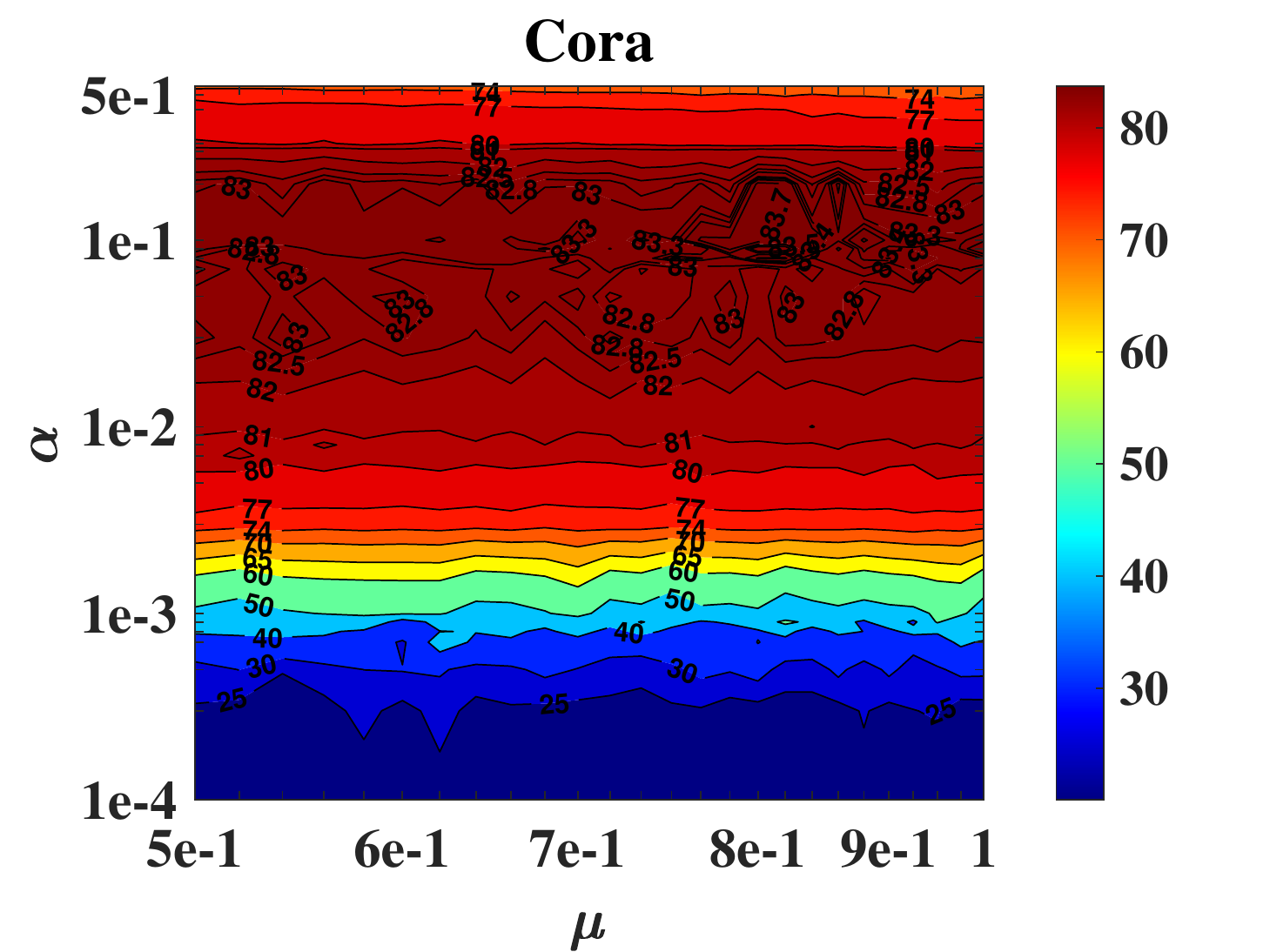}
		\includegraphics[width=0.24\linewidth]{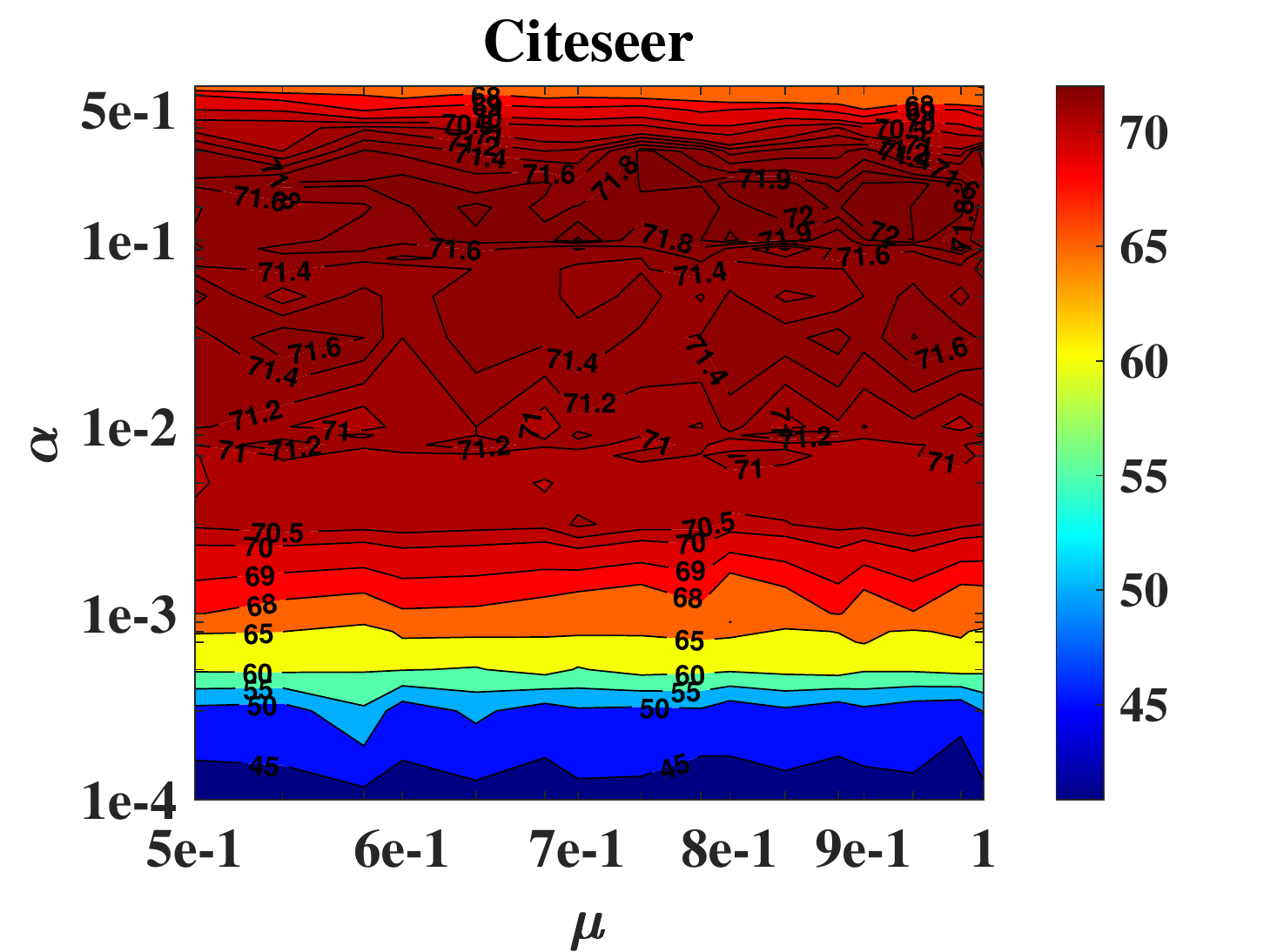}
	}
	\subfigure[\textbf{GNN-HF}]{
		\includegraphics[width=0.24\linewidth]{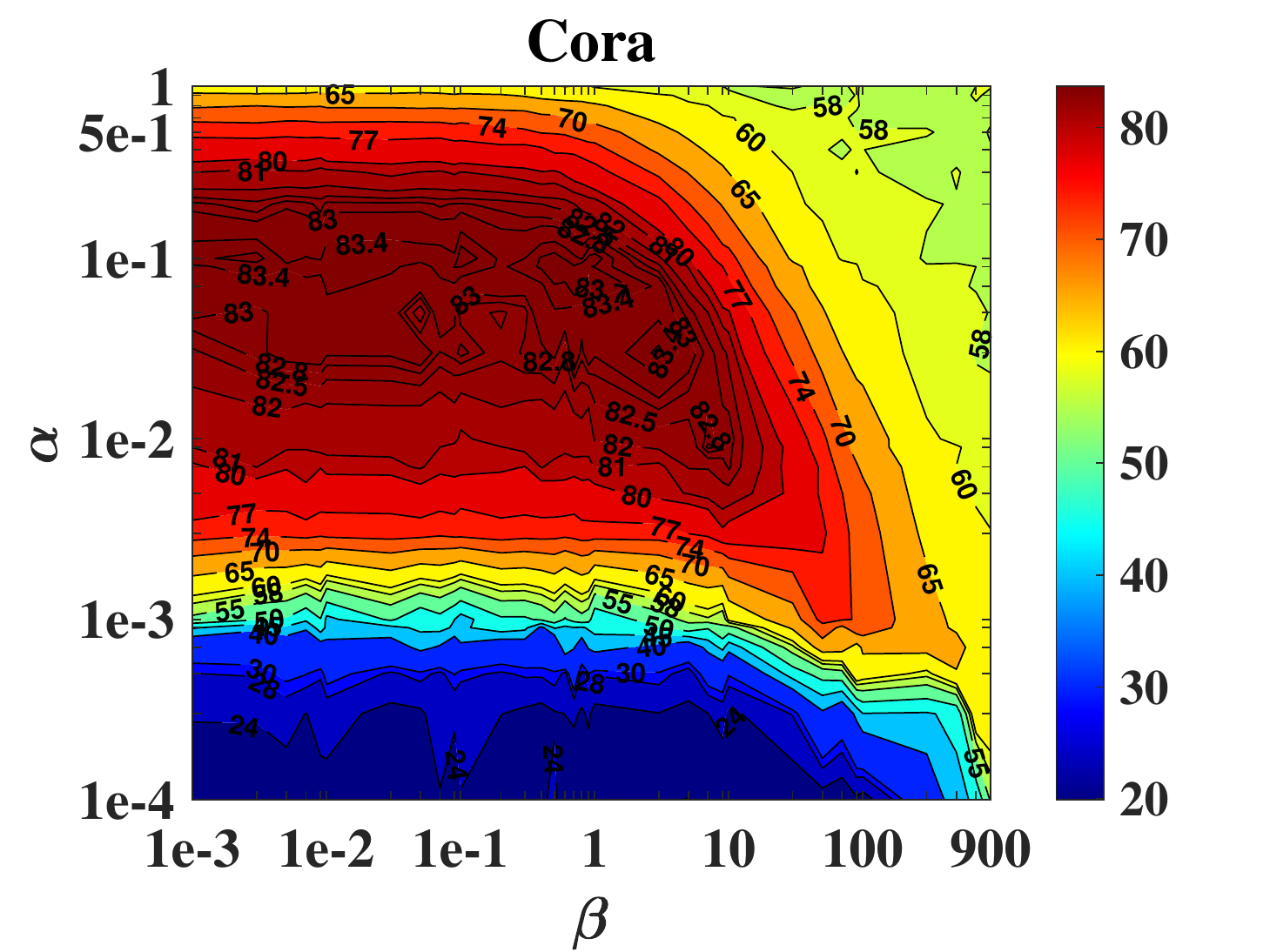}
		\includegraphics[width=0.24\linewidth]{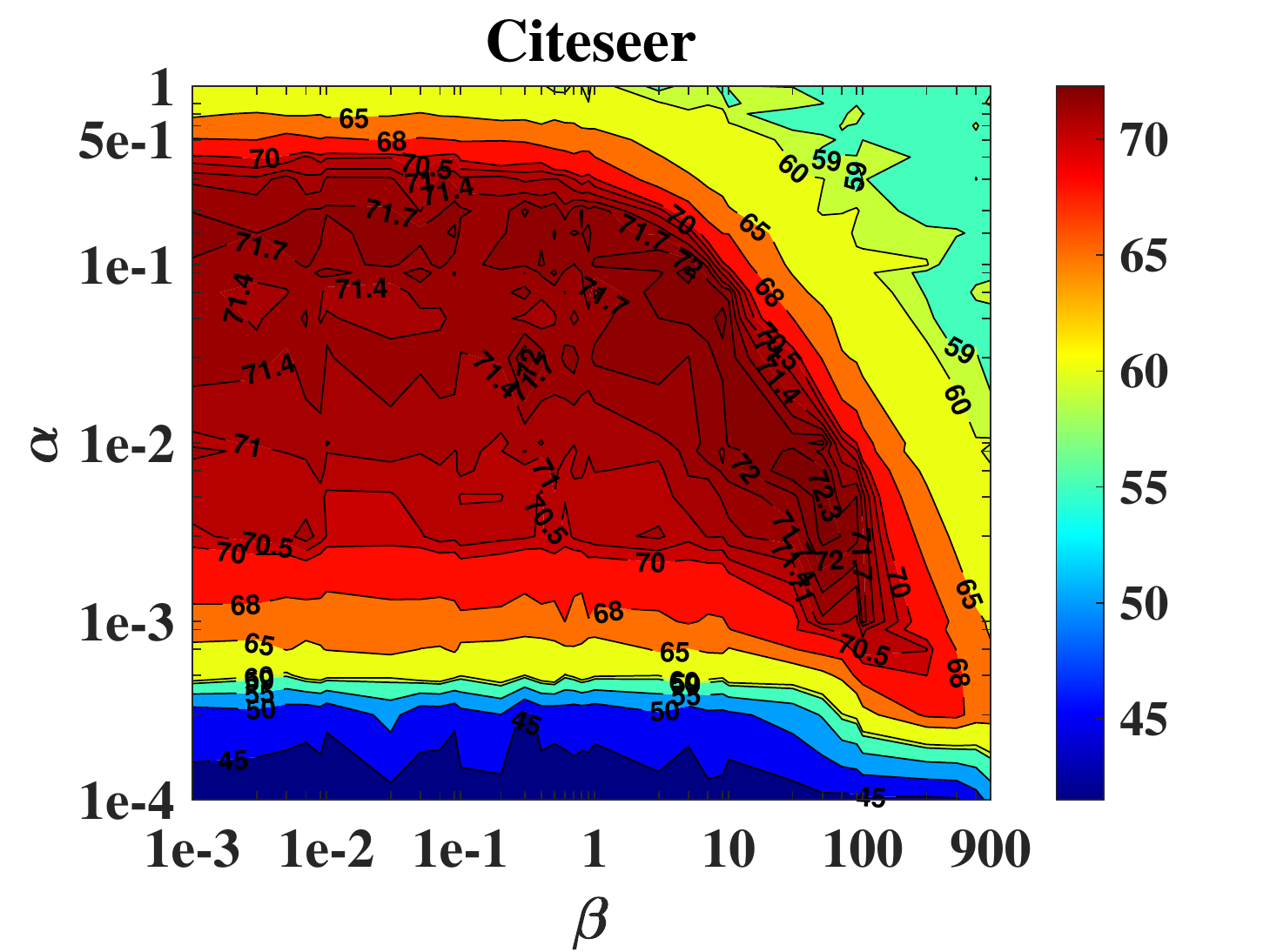}
	}
	\caption{Model analysis of \textbf{GNN-LF/HF} on Cora and Citeseer.}
	\label{parameter_cora/cite}
\end{figure*}

\begin{figure}[htbp]
	\centering
	\subfigure{
		\includegraphics[width=0.45\linewidth]{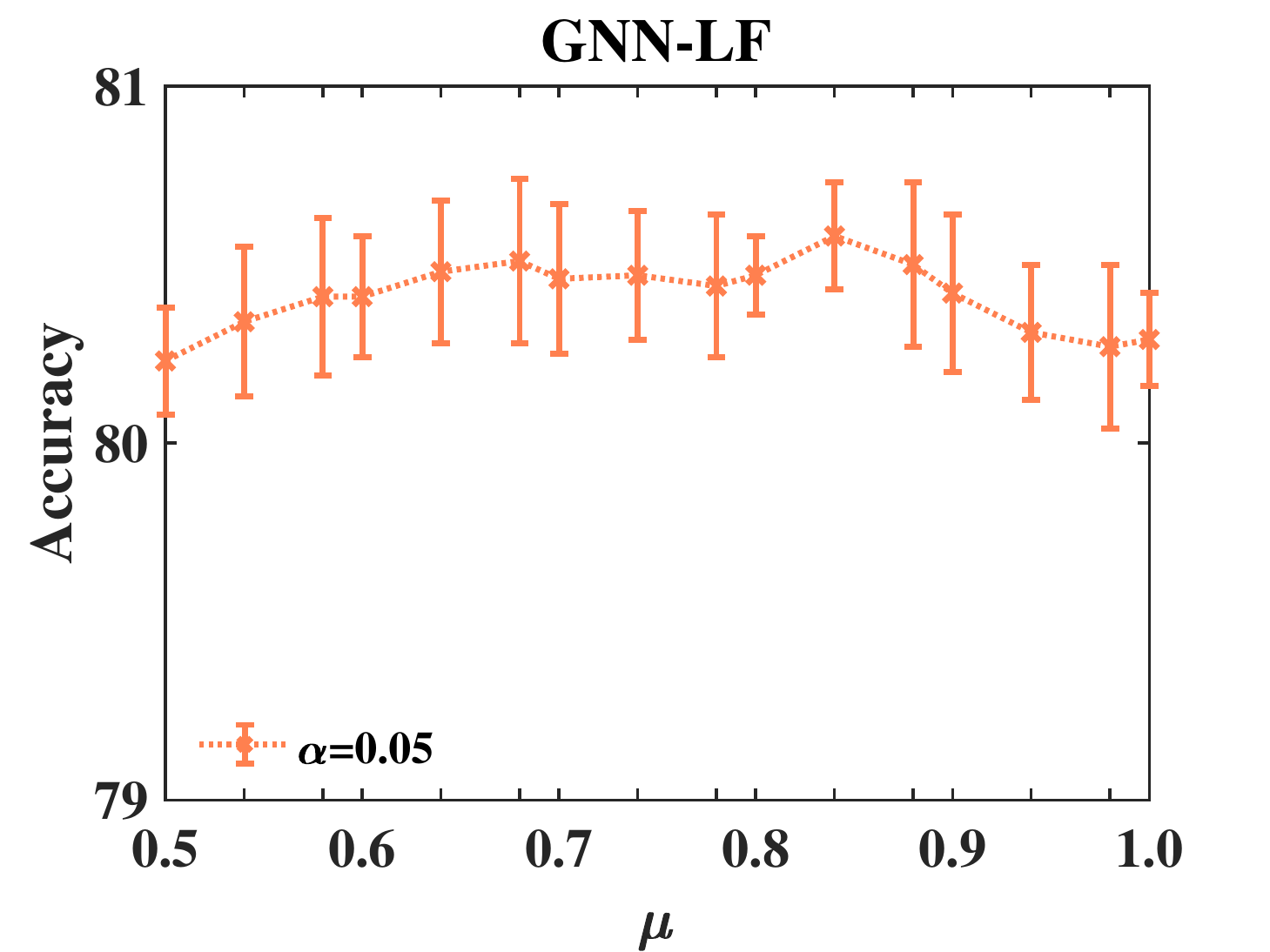}
		\includegraphics[width=0.45\linewidth]{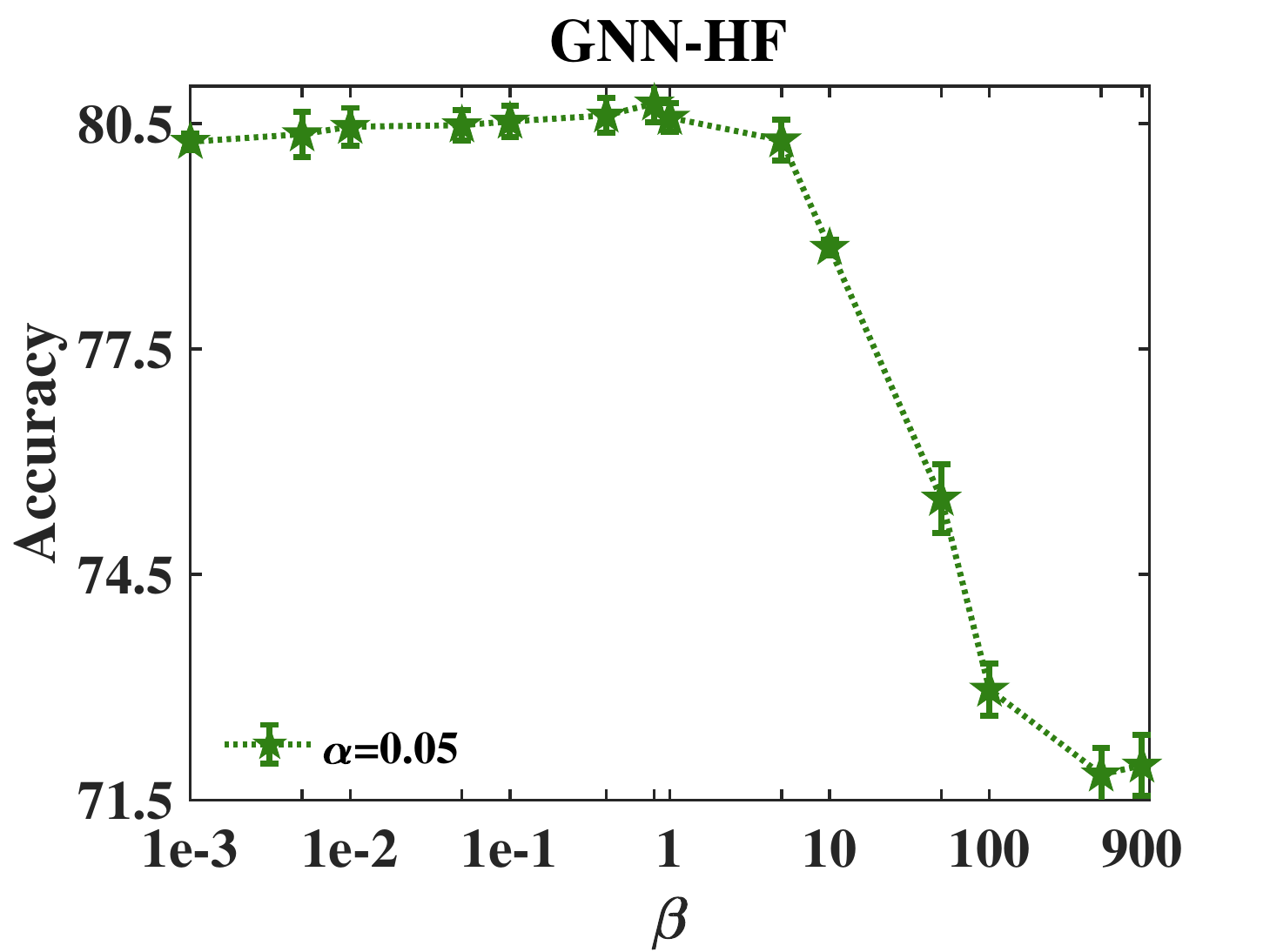}
	}
	\caption{Impact factor analysis with fixed $\alpha$ on Pubmed.}
	\label{parameter analysis}
\end{figure}

\subsection{Propagation Depth Analysis}
Because our proposed GNN-LF/HF flexibly consider extra filtering information during propagation and have high expressive power, here we further conduct experiments on GNN-LF/HF and other shallow/deep models with different propagation depths using three datasets. For all the models, we set the hidden size as 64 and tune hyperparameters reasonably to get the best performance. Note that because of the specific architecture of JKNet and IncepGCN in \cite{rong2020dropedge}, they are analyzed from 3 layers and IncepGCN on pubmed dataset faces the out of memory problem when the depth is bigger than 34. Figure \ref{layer} shows the accuracy with different propagation depths, and we have the following observations:

GNN-LF/HF and APPNP greatly alleviate the over-smoothing problem, since the performance does not drop when the depth increases. Furthermore, GNN-LF/HF are more flexible and more expressive with higher results under the influence of extra graph filters and three adjustable impact factors $\alpha$, $\mu$ and $\beta$. As analyzed before, the polynomial filter coefficients of GNN-LF/HF are further more expressive and flexible than APPNP, GCN or SGC, which is useful for mitigating over-smoothing problem. Accuracy breaks down seriously for GCN while it drops a little bit slowly for SGC, but both GCN and SGC face the over-smoothing problem since the fixed polynomial filter coefficients limit their expressive power. As for JKNet/IncepGCN, they are deep GNNs to alleviate over-smoothing problem but still have to face performance degradation when the propagation depth increases.

\subsection{Model Analysis} \label{subsec:: model analysis}

In this section, we analyze the performance of GNN-LF/HF with different impact factors: teleport probability $\alpha$, balance coefficient $\mu$ and balance coefficient $\beta$. In general, $\alpha$ adjusts the regularization term weight and has an effect on structural information during propagation; $\mu$ and $\beta$ focus on adjusting the balance weights between different filters and have effects on feature information during propagation.

We carefully adjust the value of three impact factors with GNN-LF-closed/GNN-HF-closed models on cora and citeseer datasets, and draw the contour map for accuracy in Figure \ref{parameter_cora/cite}. As can be seen: 1) For GNN-LF, $\alpha$ plays a more dominant influence than $\mu$. The classification accuracy exactly increases as $\alpha$ becomes larger, and with the continuous increase of $\alpha$, the results begin to drop. Generally, the best performance can be achieved when $\alpha \in [1e-2, 5e-1]$. On the other hand, the accuracy is relatively stable with different $\mu$, and generally speaking, $\mu$ is with a suitable range around $[0.6, 0.9]$. 2) For GNN-HF, $\alpha$ and $\beta$ both play dominant influence, the suitable weight range for $\alpha$ is also [1e-2, 5e-1] and lager $\beta$ may result in performance degradation. In general, our proposed GNN-LF/HF achieve stable and excellent performance within a wide changing range of these impact factors $\alpha$ and $\mu$ or $\beta$.

We then analyze the influence of balance coefficients $\mu$ and $\beta$ with the fixed $\alpha$ on pubmed dataset, shown in Figure \ref{parameter analysis}, and we have similar conclusions: For GNN-LF, the performance first increases stably as $\mu$ grows and then may show a slight drop after a certain threshold. The appropriate range for best performance is [0.6, 0.9]. For GNN-LF, the classification accuracy first increases and then drops with the rise of $\mu$ after a certain threshold.

\section{Conclusion}\label{sec::Conclusion}

The intrinsic relation for the propagation mechanisms of different GNNs is studied in this paper. We establish the connection between different GNNs and a flexible objective optimization framework. The proposed unified framework provides a global view on understanding and analyzing different GNNs, which further enables us to identify the weakness of current GNNs. Then we propose two novel GNNs with adjustable convolutional kernels showing low-pass and high-pass
filtering capabilities, and their excellent expressive power is analyzed as well. Extensive experiments well demonstrate the superior performance of these two GNNs over the state-of-the-art models on real world datasets.

\balance
\bibliographystyle{ACM-Reference-Format}
\bibliography{ref}

\appendix
\section{Proofs and Analysis}
\subsection{Proof of Theorem \ref{GC_operation}} \label{subsection::GC Operations}
With the objective Eq. (\ref{4_eq_ReGCN_object}), we have the following closed-form solution:
\begin{equation}\label{4_eq_ReGCN_proof1}
	\hat{\textbf Z} = (\textbf I + \tilde{\textbf{L}}) ^{-1} \textbf H.
\end{equation}
Similar to the analysis in \cite{nt2019revisiting}, we can decompose the matrix $(\textbf I + \tilde{\textbf{L}}) ^{-1}$ and get the first-order truncated form as:
\begin{equation}\label{4_eq_ReGCN_proof3}
	(\textbf I + \tilde{\textbf{L}})^{-1} \approx \textbf I - \tilde{\textbf{L}} = \hat{\tilde{\textbf{A}}}.
\end{equation}
In this way, we have the first-order approximation of the GC operation:
\begin{equation}\label{4_eq_ReGCN_proof4}
	\begin{aligned}
		\textbf{Z}^{(GC)} &= \hat{\tilde{\textbf{A}}} \textbf{H} = \hat{\tilde{\textbf{A}}} \textbf{X} \textbf{W}.
	\end{aligned}
\end{equation}
At this point, we provide another explanation on graph convolutional operation with the first-order approximation based on the framework.
\subsection{Proof of Theorem \ref{GNN_HF_convergence}} \label{sec::conver_HF}
\textbf{GNN-HF-iter} uses the iteration equation:
\begin{equation}\label{3_eq_iterform}
	\begin{aligned}
		\textbf Z^{(0)} = \frac{1}{\alpha \beta + 1} \textbf H &+ \frac{\beta}{\alpha \beta + 1} \tilde{\textbf{L}}\textbf H, \\
		\textbf Z^{(k+1)} = \frac{\alpha \beta - \alpha + 1}{\alpha \beta + 1} \hat{\tilde{\textbf{A}}} \textbf Z^{(k)} &+ \frac{\alpha}{\alpha \beta + 1} \textbf H + \frac{\alpha \beta}{\alpha \beta + 1} \tilde{\textbf{L}} \textbf H.
	\end{aligned}
\end{equation}
The corresponding closed form with the same $ \textbf H = f_\theta (\textbf{X}) $ is:
\begin{equation}\label{3_eq_closedform}
	\textbf{Z} =\big\{(\beta + 1/\alpha) \textbf I + (1 - \beta - 1/\alpha) \hat{\tilde{\textbf{A}}} \big\}^{-1} \{\textbf I + \beta \tilde{\textbf{L}} \}\textbf H.
\end{equation}
After the $K$-layer propagation using GNN-HF-iter, the corresponding expansion result can be written as:
\begin{equation}\label{3_eq_converproof0}
	\begin{aligned}
		\textbf Z^{(k)} = & \bigg\{(\frac{\alpha \beta - \alpha + 1}{\alpha \beta + 1})^k \hat{\tilde{\textbf{A}}}^k + \alpha \sum_{i=0}^{k-1}(\frac{\alpha \beta - \alpha + 1}{\alpha \beta + 1})^i \hat{\tilde{\textbf{A}}}^i\bigg\} \bigg\{\frac{1}{\alpha \beta + 1} \textbf H \\
		&+ \frac{\beta}{\alpha \beta + 1} \tilde{\textbf{L}} \textbf H\bigg\}, \\
	\end{aligned}
\end{equation}
where $\beta \in (0, \infty)$, $\alpha \in (0, 1]$ and $|\frac{\alpha \beta - \alpha + 1}{\alpha \beta + 1}|< 1 $. Similar with the proof process of \textbf{GNN-LF-iter} in Theorem \ref{GNN_LF_convergence}, we have the converging result as:
\begin{equation}\label{3_eq_converproof6}
	\begin{aligned}
		&\textbf Z^{(\infty)} = \big\{(\beta + 1/\alpha) \textbf I + (1 - \beta - 1/\alpha) \hat{\tilde{\textbf{A}}} \}^{-1} \{\textbf I + \beta \tilde{\textbf{L}} \big\}\textbf H,
	\end{aligned}
\end{equation}
which exactly is the Eq. (\ref{1_eq_V2design1}) for calculating GNN-HF-closed.
\subsection{Expressive Power Analysis} \label{sec::Polynomial Filter}
\textbf{Analysis of SGC.} \quad
As \cite{wu2019simplifying} points out, the $K$-layer graph convolutional operations or simplified graph convolutional operations act as the spectral polynomial filter of order $K$ with fixed coefficients. \cite{chen2020simple} proves that such fixed coefficients limit the expressive power of GCN and thus leads to over-smoothing. The $K$-order polynomial filter on graph signal \textbf{x} is:
\begin{equation}\label{2_eq_sgcfilter1}
	\textbf z^{(K)} = \hat{\tilde{\textbf{A}}}^{K} \textbf{x}  =  \big(\textbf I - \tilde{\textbf{L}}\big)^{K} \textbf{x}.
\end{equation}
By calculating the expansion of Eq. (\ref{2_eq_sgcfilter1}), we can conclude the $k$-th polynomial filtering term, denoted by $\theta_k \tilde{\textbf{L}}^k$.Then for the filter coefficients on $\tilde{\textbf{L}}^k, k \in [0, K]$, we have $\theta_{k} =  (-1)^k \binom{K}{k}$, which is a fixed constant for any k.

\textbf{Analysis of PPNP.} \quad As proved in \cite{klicpera2019predict}, PPNP or $K$-order APPNP $(K\rightarrow \infty)$ has the following expressive power:
\begin{equation}\label{1_eq_expressproof1}
	\textbf z ^ {(K)} = \big\{(1-\alpha)^K\hat{\tilde{\textbf{A}}}^K + \alpha \sum_{i=0}^{K-1}(1-\alpha)^{i} \hat{\tilde{\textbf{A}}}^{i}\big\} \textbf x.
\end{equation}
Since $(1-\alpha)^{\infty} \rightarrow 0$, we can rewrite it using the normalized graph Laplacian $\tilde{\textbf{L}}$ as:
\begin{equation}\label{1_eq_ppnpfilter1}
	\begin{aligned}
		\textbf z ^ {(K)} &= \alpha \sum_{i=0}^{K-1}\big(1-\alpha\big)^{i} \hat{\tilde{\textbf{A}}}^{i} \textbf x  = \alpha \sum_{i=0}^{K-1}\big(1-\alpha\big)^{i} \big(\textbf I - \tilde{\textbf{L}}\big)^{i} \textbf x.
	\end{aligned}
\end{equation}
Then we can calculate the expansion of Eq. (\ref{1_eq_ppnpfilter1}) and conclude the $k$-th polynomial filtering term, denoted by $\theta_k \tilde{\textbf{L}}^k$. Then for the filter coefficients on $\tilde{\textbf{L}}^k, k \in [0, K-1]$, we have:
\begin{equation}\label{1_eq_ppnpfilter3}
	\theta_{k} = \alpha \sum\limits_{i=k}^{K-1} (1-\alpha)^i (-1)^k\binom{i}{k}.
\end{equation}


\textbf{Analysis of GNN-HF.} \quad Similarly, taking the propagation result in Eq. (\ref{3_eq_converproof0}) with $                                                                                                                                                                                                                                                                                                                                                                                                                                                                                                                                                                                                                             K \rightarrow \infty$, we can also have the corresponding filtering expression:
\begin{equation}\label{1_eq_V2filter1}
	\begin{aligned}
		\textbf z^{(K)} &= \Big\{\alpha \sum_{i=0}^{K-1}\big(\frac{\alpha \beta - \alpha + 1}{\alpha \beta + 1}\big)^i \hat{\tilde{\textbf{A}}}^i\Big\} \Big\{\frac{1}{\alpha \beta + 1} \textbf x + \frac{\beta}{\alpha \beta + 1} \tilde{\textbf{L}} \textbf x\Big\} \\
		&= \frac{\alpha(\beta + 1)}{\alpha \beta + 1} \Big\{\sum_{i=0}^{K-1}\big(\frac{\alpha \beta - \alpha + 1}{\alpha \beta + 1}\big)^i \hat{\tilde{\textbf{A}}}^i \Big\} \textbf x - \frac{\alpha \beta}{\alpha \beta - \alpha + 1} \bm{\cdot} \\
		&\Big\{\sum_{i=1}^{K}\big(\frac{\alpha \beta - \alpha + 1}{\alpha \beta + 1}\big)^i \hat{\tilde{\textbf{A}}}^i \Big\}\textbf x \\
		&= \frac{\alpha(\beta + 1)}{\alpha \beta + 1} \Big\{\sum_{i=0}^{K-1}\big(\frac{\alpha \beta - \alpha + 1}{\alpha \beta + 1}\big)^i \big({\textbf I - \tilde{\textbf{L}}}\big)^i \Big\} \textbf x - \frac{\alpha \beta}{\alpha \beta - \alpha + 1} \bm{\cdot} \\
		&\Big\{\sum_{i=1}^{K}\big(\frac{\alpha \beta - \alpha + 1}{\alpha \beta + 1}\big)^i \big({\textbf I - \tilde{\textbf{L}}}\big)^i \Big\}\textbf x.
	\end{aligned}
\end{equation}

Then for the filter coefficients on $\tilde{\textbf{L}}^k, k \in [0, K]$, we have the following conclusions:
\begin{itemize}
	\item[1)] \textit{Filter coefficients for} $\tilde{\textbf{L}}^0$:
	\begin{equation}\label{1_eq_V2ilter2}
		\begin{aligned}
			&\theta_{0} = \frac{\alpha(\beta + 1)(\alpha \beta - \alpha + 1)}{(\alpha \beta + 1)^2} - \frac{\alpha \beta (\alpha \beta -\alpha + 1)^{K-1}}{(\alpha \beta + 1)^{K}} + \sum\limits_{j=1}^{K-1} \delta_j\binom{j}{0}, \\
			&\delta_j = \big\{\frac{\alpha(\beta + 1)}{\alpha \beta + 1}-\frac{\alpha \beta}{\alpha \beta - \alpha + 1} \big\} (\frac{\alpha \beta - \alpha + 1}{\alpha \beta + 1}) ^j.
		\end{aligned}
	\end{equation}
	\item[2)] \textit{Filter coefficients for} $\tilde{\textbf{L}}^k, k \in [1, K-1]$:
	\begin{equation}\label{1_eq_V2filter3}
		\begin{aligned}
			\theta_{0} &= \sum\limits_{j=i}^{K} \delta_j(-1)^i\binom{j}{i}, \\
			\delta_j &= \big\{\frac{\alpha(\beta + 1)}{\alpha \beta + 1}-\frac{\alpha \beta}{\alpha \beta - \alpha + 1} \big\} (\frac{\alpha \beta - \alpha + 1}{\alpha \beta + 1}) ^j.
		\end{aligned}
	\end{equation}
	\item[3)] \textit{Filter coefficients for} $\tilde{\textbf{L}}^K$:
	\begin{equation}\label{1_eq_V2filter4}
		\theta_{K} = - \frac{\alpha \beta(\alpha \beta - \alpha + 1)^{K-1}}{(\alpha \beta + 1)^{K}}(-1)^K \binom{K}{K}.
	\end{equation}
\end{itemize}

\balance
\end{document}